%% file: neurips_2026.tex
\pdfoutput=1 
\documentclass{article}

\usepackage[preprint]{neurips_2026}

\usepackage[utf8]{inputenc} 
\usepackage[T1]{fontenc}    
\usepackage{hyperref}       
\usepackage{url}            
\usepackage{booktabs}       
\usepackage{amsmath}        
\usepackage{amsfonts}       
\usepackage{amssymb}        
\usepackage{amsthm}         
\usepackage{nicefrac}       
\usepackage{microtype}      
\usepackage{xcolor}         
\usepackage{booktabs}
\usepackage{graphicx}

\usepackage{algorithm}
\usepackage{algpseudocode}

\newtheorem{theorem}{Theorem}
\newtheorem{lemma}[theorem]{Lemma}
\newtheorem{proposition}[theorem]{Proposition}
\newtheorem{corollary}[theorem]{Corollary}
\theoremstyle{definition}
\newtheorem{assumption}[theorem]{Assumption}

\theoremstyle{remark}

\providecommand{\KL}{\mathrm{KL}}
\providecommand{\E}{\mathbb{E}}
\providecommand{\R}{\mathbb{R}}
\providecommand{\N}{\mathcal{N}}

\providecommand{\Cov}{\mathrm{Cov}}
\providecommand{\tr}{\mathrm{tr}}
\providecommand{\norm}[1]{\lVert #1 \rVert}

\title{The Value of Covariance Matching in Gaussian DDPMs and the Lanczos Sampler}

\author{Md Sahil Akhtar \\
   Electrical Engineering and Computer Science \\
  Massachusetts Institute of Technology \\
  \texttt{sahil003@mit.edu} \\
  \And
  Aymane El Gadarri \\
  Operations Research Center \\
  Massachusetts Institute of Technology \\
  \texttt{aymelgad@mit.edu} \\
  \AND
  Vivek F. Farias \\
  Sloan School of Management \\
  Massachusetts Institute of Technology \\
  \texttt{vivekf@mit.edu} \\
  \And
  Adam D. Jozefiak \\
Operations Research Center \\
Massachusetts Institute of Technology \\
  \texttt{jozefiak@mit.edu} \\
}

\begin{document}

\maketitle

\begin{abstract}
A central error measure in Gaussian DDPMs is the path-space KL divergence between the exact reverse chain and the learned Gaussian reverse process. This quantity is especially relevant for procedures such as classifier guidance, which perturb the entire reverse trajectory rather than only the terminal sample. Prior analyses show that standard isotropic reverse covariances suffer an unavoidable $\Omega(1/T)$ path-KL error as the number of denoising steps $T$ grows. We show that matching the full posterior covariance breaks this barrier, yielding an order-wise improvement that reduces the path KL to $O(1/T^2)$. To make full covariance matching practical, we introduce the Lanczos Gaussian sampler (LGS), a training-free, matrix-free method for sampling from the optimal reverse covariance using only covariance-vector products, which are available through Jacobian-vector products of the posterior mean. LGS avoids dense covariance storage and auxiliary covariance models. We prove that LGS approximation error decays exponentially in the number of Lanczos steps, where each Lanczos step requires a single Jacobian-vector product. Empirically, using only just three such steps improves sample quality over strong diagonal-covariance baselines, including OCM-DDPM, across standard image benchmarks. This identifies full covariance matching as both theoretically valuable and practically accessible for fast DDPM sampling.
\end{abstract}

\input{sections/intro.tex}

\input{sections/setup.tex}
\input{sections/main_result.tex}
\input{sections/lanczos.tex}

\input{sections/experiments.tex}
\input{sections/conclusion.tex}

\bibliographystyle{plainnat}
{\small\bibliography{references}}

\appendix

\input{sections/appendix.tex}

\input{sections/appendix/appendix_counter_example.tex}
\input{sections/appendix/appendix_lanczos.tex}

\input{sections/appendix/appendix_main_result.tex}
\input{sections/appendix/appendix_experiments.tex}


\end{document}

%% file: sections/intro.tex
\section{Introduction}

A central approximation error in the study of Gaussian Denoising Diffusion Probabilistic Models (DDPMs) is the path-space KL divergence between the exact reverse-time denoising chain and the learned Gaussian reverse process. Many uses of diffusion models require an accurate reverse trajectory as opposed to simply accurate endpoint samples. Classifier guidance is a canonical example, since it modifies the reverse chain at each denoising step. The unconditional path KL controls the distributional error of such classifier-guided sampling.
%

Prior analyses show that for standard Gaussian DDPM parameterizations, where the posterior (denoising) kernel is chosen to have isotropic covariance, the path KL generally cannot decay faster than $\Omega(1/T)$ as the number of denoising steps $T$ grows. We see here that this limitation persists even when the reverse process is allowed to use the best diagonal approximation to the true posterior covariance at each denoising step. 

\textbf{Value of full covariance matching: } Our first key result is that denoising with the optimal posterior covariance breaks the $1/T$ barrier. By matching the conditional covariance of the true reverse kernel, the path KL can be reduced to order $O(1/T^2)$. {\em In particular, denoising with the optimal covariance yields an order-wise improvement in the iteration complexity of Gaussian DDPM sampling.}

The above value notwithstanding, matching the optimal posterior covariance is known to be challenging in practice for at least two reasons. First, it has been shown that learning the optimal posterior covariance can make training unstable and degrade sample quality relative to simply using a scaled identity covariance in the denoising kernel. Second, even assuming one could learn the optimal covariance, storage requirements are prohibitive in practical (high) dimensions. Two important pieces of progress have been made in this regard: First, it has been observed that in Gaussian DDPMs Tweedie-type identities express the posterior covariance via the Jacobian of the posterior mean. Thus, covariance-vector products can be computed by Jacobian-vector products without explicitly storing the covariance. Second, to combat storage issues, the diagonal of the optimal covariance can be extracted without explicitly storing the covariance through the use of the Hutchinson trace estimator. 

\textbf{Algorithmic realization via matrix-free Gaussian sampling: } As opposed to estimating the covariance or an approximation thereof, we focus directly on sampling from a Gaussian whose covariance is only available through matrix-vector products.
To do this, we introduce the {\em Lanczos Gaussian Sampler (LGS)}. Our approach {\em does not} rely on sampled approximations to the covariance (ala Hutchinson) and provably only requires a small (fixed in problem dimension) number of so-called Lanczos iterations. As such, our method employs the {\em whole covariance matrix} (as opposed to a diagonal approximation) and {\em does not require learning an auxiliary model}. The approach improves upon a SOTA alternative (OCM-DDPM).  

\subsection{Related Literature} 

\paragraph{Covariance parameterizations in diffusion models.}
Diffusion models were originally introduced with Gaussian reverse kernels whose means and
covariances could both be learned~\citep{sohl2015deep}. Modern DDPMs typically simplify the
reverse covariance to a state-independent scaled identity, such as $\beta_t I$ or
$\widetilde{\beta}_t I$~\citep{ho2020denoising}, for computational stability and scalability. Several
works have revisited this simplification. \citet{nichol2021improved} learn diagonal log-variances
interpolating between $\beta_t$ and $\widetilde{\beta}_t$, while \citet{bao2022analytic} derive a
training-free KL-optimal isotropic variance for a pretrained score model. More closely related,
\citet{bao2022estimating} derive optimal covariance formulas, including full-covariance expressions,
and \citet{ou2025ocm} propose OCM-DDPM, which learns the optimal diagonal posterior covariance
using Tweedie-type identities and Hutchinson/Rademacher-style diagonal estimators. These works
show that covariance choices can substantially improve fast sampling, but their scalable
implementations remain scalar or diagonal and therefore do not exploit the full posterior covariance.

\paragraph{Beyond diagonal covariance and matrix-free sampling.}
Dense state-dependent covariance modeling is infeasible at image scale: storing a covariance costs
$O(d^2)$ memory and direct Gaussian sampling requires dense factorization. Existing non-diagonal
methods therefore impose additional structure. For example, \citet{kdct} use a Kronecker-DCT
covariance model tailored to image data, exploiting color and spatial-frequency correlations. In
contrast, our approach does not learn, store, or structurally parameterize the covariance. We use the
full posterior covariance only through covariance-vector products, available via Jacobian-vector
products of the posterior mean, and sample from the corresponding Gaussian with a matrix-free
Lanczos approximation to $\Sigma^{1/2}z$. Lanczos and Krylov methods are classical tools for
approximating matrix functions applied to vectors~\citep{cullum2002lanczos,saad2011numerical};
we exploit the approximation framework of \citet{musco2018stability} with $f(x)=\sqrt{x}$ and derive novel dimension-free exponential decay bounds of the sampling error. Unlike
Hutchinson-style estimators, which recover traces or diagonals, Lanczos uses the full covariance
implicitly and requires no auxiliary covariance model.

\paragraph{Path-KL theory and Gaussian-channel analysis.}
A large body of theory studies diffusion samplers by controlling reverse-process error, often through
path-space or Girsanov arguments~\citep{chen2022sampling,benton2023nearly,lee2023convergence,
li2024towards,conforti2025kl}. For DDPMs,
\citet{chen2022sampling} prove an $O(d/T)$ path-KL upper bound and give examples showing that
this rate is tight for standard covariance choices. Recent works obtain faster
$\widetilde O(1/T^2)$ rates for endpoint KL or terminal marginal error under fixed isotropic
covariance~\citep{jiao2025optimal,jain2025sharp}. These results, however, control only the endpoint KL and do not bound the path KL.
 This distinction is important for procedures that modify the reverse trajectory, such as classifier guidance. Indeed, in Appendix~\ref{sec:classifier_guidance} and Appendix~\ref{app:counter-example},
  we show that without matching the optimal reverse kernel covariance, the conditional endpoint KL under classifier guidance can still scale as $\Omega(1/T)$.
 Our analysis explains how full covariance matching breaks
the $\Theta(1/T)$ path-KL barrier. The key step is an information-theoretic representation of the
one-step denoising loss as a Gaussian-channel mutual information, combined with the I-MMSE
identity~\citep{guo2005mutual} and higher-order derivative formulas~\citep{payaro2009hessian,
nguyen2024derivatives}.

%% file: sections/setup.tex
\section{Setup and Results}

\begin{figure}[t]
  \centering
  \makebox[\textwidth][c]{%
  \begin{minipage}[c]{0.42\textwidth}
    \centering
    \includegraphics[width=\linewidth]{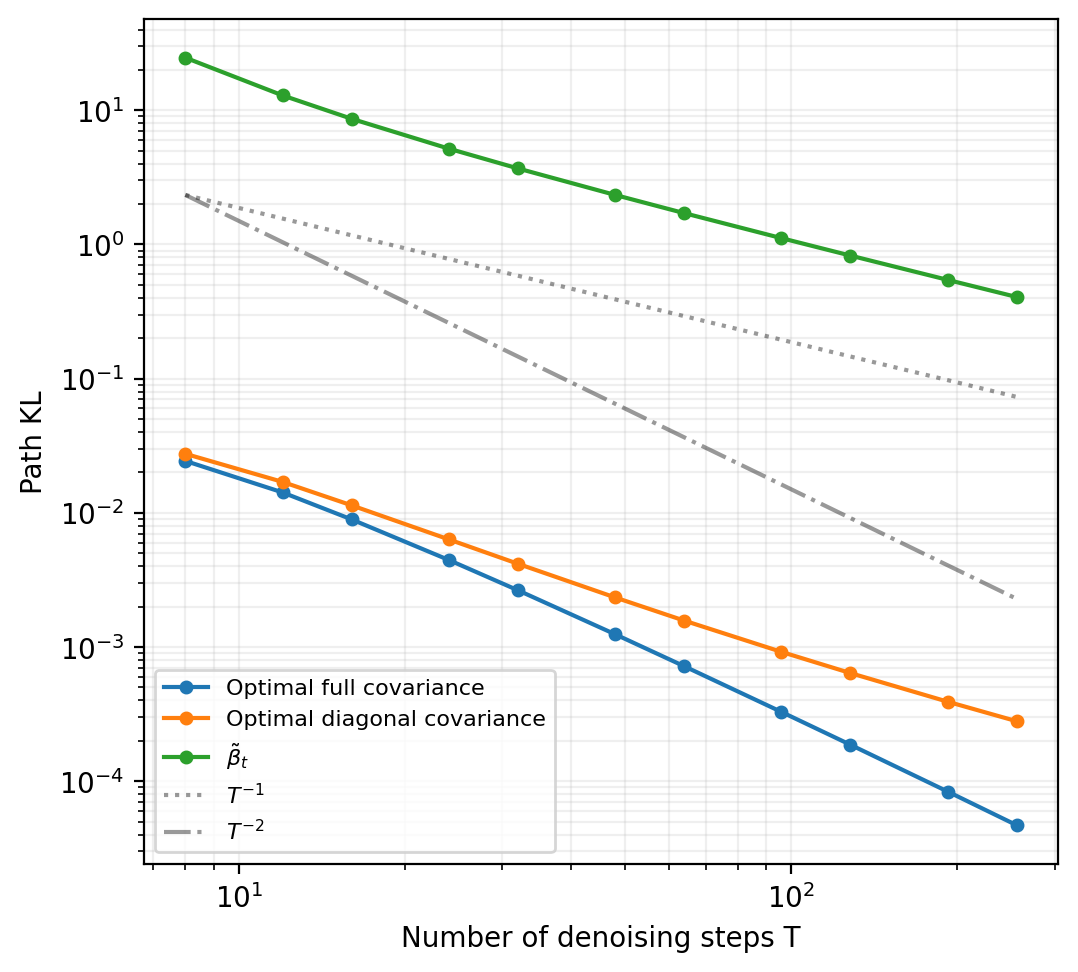}
  \end{minipage}%
  \hspace{0.02\textwidth}%
  \begin{minipage}[c]{0.40\textwidth}
    \centering
    {\footnotesize
    \setlength{\tabcolsep}{3.6pt}
    \renewcommand{\arraystretch}{0.95}
    \begin{tabular}{lcccc}
      \toprule
      & \multicolumn{4}{c}{CELEBA 64$\times$64} \\
      \cmidrule(lr){2-5}
      \# timesteps $K$ & 25 & 50 & 100 & 200 \\
      \midrule
      DDPM, $\beta$
      & 116.16 & 55.39 & 29.23 & 14.39 \\
      DDPM, $\tilde{\beta}$
      & 24.73 & 18.76 & 14.26 & 10.57 \\
      OCM-DDPM
      & 12.66 & 9.16 & 7.09 & 6.04 \\
      Lanczos $(m=3)$
      & 9.58 & 7.01 & 5.21 & 4.60 \\
      Lanczos $(m=5)$
      & \textbf{8.84} & \textbf{6.29} & \textbf{4.79} & \textbf{4.40} \\
      \bottomrule
    \end{tabular}
    }
  \end{minipage}%
  }
  \caption{(Left) Path KL vs. no. of denoising steps with full covariance matching (blue), $\tilde{\beta}_t$-scaled isotropic covariance (green) and diagonal covariance (orange). 
Path KL with isotropic covariance scales like $\Theta(1/T)$ in agreement lower bounds of \cite{chen2022sampling}. Allowing for a diagonal approximation of the covariance yields lower path KL but remains $\Theta(1/T)$. Matching full covariance yields an order-wise improvement, $\Theta(1/T^2)$.  Right: The results of using our approach to matching the full covariance (the Lanczos Gaussian sampler) on the CELEBA 64$\times$64 dataset. OCM DDPM \citep{ou2025ocm} is a SOTA approach.  
  }
  \label{fig:pathkl_toy_celeba_subset}
\end{figure}



A diffusion model parameterized by $\theta$ is a latent variable model $p_\theta(x_0) = \int p_\theta(x_{0:T}) \; dx_{1:T}$ where $x_{1:T}$ is a sequence of latents. Given a distribution $q(x_0)$ supported on the ball $\{x: x \in \mathbb{R}^d, \|x\|\leq D\}$, $\theta$ is canonically learned by minimizing the loss
\begin{equation}
\label{eqn:diffusionLoss}
\mathcal{L}(\theta)
\triangleq
\sum_{t=2}^T
\E_{q(x_t,x_0)}\!\left[
  \KL\!\left(q(x_{t-1}\mid x_t,x_0)\,\middle\|\,p_\theta(x_{t-1}\mid x_t)\right)
\right]
\end{equation}

In the context of Gaussian DDPMs~\citep{ho2020denoising}, the sequence of latents $x_{1:T}$ is generated according to 
\begin{equation}
\label{eqn:forwardKernel}
q(x_t \mid x_{t-1})
  := \N\!\big(\sqrt{1-\beta_t}\,x_{t-1},\; \beta_t I_d\big),
  \qquad t = 1,\ldots,T, 
\end{equation}
where $\{\beta_t\}_{t=1}^{T}\subset(0,1)$ is a pre-defined step-size schedule. The diffusion model is itself parameterized as a Gaussian process: 
\begin{equation}\label{eqn:reverseKernel}
p_\theta(x_{t-1}\mid x_t)
  := \N\!\big(\mu_\theta(x_t,t),\; \Sigma_\theta(x_t,t)\big),
   \qquad t = 2,\ldots,T, 
\end{equation}
with $p(x_T) := \N(0,I)$. 
We focus on the \emph{unconditional path KL} between the target path distribution $q(x_{1:T})$ and that of the learned reverse process $p_{\theta}(x_{1:T})$, over paths stopped at $t=1$, $x_{1:T}$. \begin{equation}\label{eqn:unconditionalPathKL}
    \KL\!\left(q(x_{1:T}) \,\middle\|\, p_\theta(x_{1:T})\right).
\end{equation}
In addition to being a focal quantity of interest in analysis of diffusion modeling, \citep{chen2022sampling,benton2023nearly}, the unconditional path KL is particularly important in the context of algorithms that rely on faithful reconstruction of the entire path, e.g. classifier guided diffusion modeling~\citep{nichol2021improved}. 

\subsection{The Value of Learning the Posterior Covariance}

Let $\mu^*(\cdot,\cdot)$ and $\Sigma^*(\cdot,\cdot)$ minimize the loss~\eqref{eqn:diffusionLoss} over all functions from $\mathbb{R}^d \times \mathbb{N}$, to $\mathbb{R}^d$ and  $\mathbb{S}^d_+$ respectively. We make the following assumption on the expressivity of the class of functions $\mu_\theta$ and $\Sigma_\theta$ in our analysis: 
\begin{assumption}
There exists a $\theta^*$ such that $\mu_{\theta^*}(\cdot,\cdot) = \mu^*(\cdot,\cdot)$ and $\Sigma_{\theta^*}(\cdot,\cdot) = \Sigma^*(\cdot,\cdot)$.
\end{assumption} 

Then our main result, presented formally as Theorem~\ref{thm:mainTheorem} in Section~\ref{sec:approximation_error} shows that for an appropriate step size schedule, 
\[
\KL\!\left(q(x_{1:T}) \,\middle\|\, p_{\theta^*}(x_{1:T})\right)
= 
O\left(
\frac{1}{T^2}
\right)
\]
where the big-$O$ hides terms that scale polynomially in $D$. If in contrast, we insist that the covariance employed in denoising, $\Sigma_{\theta}(\cdot,\cdot)$ be isotropic, i.e. $\Sigma_{\theta}(\cdot,\cdot) \propto I_d$ is some scaling of the identity matrix, it is known that the path KL is necessarily $\Omega(1/T)$, see \citet[Theorem~7]{chen2022sampling}. The result above thus quantifies the value of learning the posterior covariance. The conceptual idea made precise by the result is that errors in estimating the posterior mean contribute irreducible error to the path KL; errors in posterior covariance contribute $O(1/T)$ error; whereas errors in higher order tensor moments of the posterior contribute $O(1/T^2)$ error.  The left panel of Figure~\ref{fig:pathkl_toy_celeba_subset} provides a striking illustration of this theory in a simple model from \cite{ou2025ocm} where $q$ is a mixture of Gaussians.


\subsection{Drawing Gaussian Samples with the Optimal Posterior Covariance}
\label{sec:lanczos_setup}

How do we {\em practically} realize the value of matching the full covariance?  As discussed in the introduction, a robust literature now attempts to learn approximations to $\Sigma_{\theta^*}$; to manage the stability of learning and make storage feasible, these approaches typically learn a diagonal approximation to $\Sigma_{\theta^*}$. We take a different approach: observe we do not require $\Sigma_{\theta^*}$ directly and instead only need to sample from the $p_{\theta^*}(x_{t-1}|x_t)$. But such a sample can be obtained as 
\[
\mu_{\theta^*}(x_t,t) + \Sigma_{\theta^*}^{1/2}(x_t,t)Z
\] 
where $Z$ is a standard Gaussian so that our goal is to compute $\Sigma_{\theta^*}^{1/2}(x_t,t)Z$. Now it is known (via Tweedie's formula, and observed by \cite{efron2011tweedie,manor2023posterior,zhang2023moment}) that  
\[
\Sigma_{\theta^*}(x_t,t) 
=
\frac{\beta_t}{\sqrt{2-\beta_t}}
\nabla_x \mu_{\theta^*}(x_t,t)
\]
Consequently, $\Sigma_{\theta^*}(x_t,t)Z$ is easily computed with $O(d)$ storage and a single backward pass through the network defining $\mu_{\theta^*}(\cdot,\cdot)$. {\em Can we use this ability to compute  $\Sigma_{\theta^*}(x_t,t)^{1/2} Z$?} 

In Section~\ref{sec:lanczos} we present the {\em Lanczos Gaussian sampler}. This sampler uses a small number of so-called Lanczos iterates of the form 
$\Sigma_{\theta^*}(x_t,t)^kZ$ for $k=1,2,\dots, m$ to approximate  $\Sigma_{\theta^*}(x_t,t)^{1/2} Z$. We show in Theorem~\ref{thm:lanczos} that the approximation error of this procedure is $O(3^{-m})$. Our experiments in Section~\ref{sec:experiments} (see Table~\ref{tab:fid_all_datasets}) take $m=3,5$ and show non-trivial improvements over a SOTA approach (OCM DDPM) in several settings, notably {\em without requiring an auxiliary model}. The right panel of Figure~\ref{fig:pathkl_toy_celeba_subset} previews those experimental results for the CELEBA 64$\times$64 dataset.

%% file: sections/main_result.tex
\section{The Value of Covariance Learning}
\label{sec:approximation_error}

This section establishes our main result and provides an outline of the proof. Our proof is a departure from existing analysis techniques for the path KL \citep{li2024towards,benton2023nearly,chen2022sampling}
, and rests on an information theoretic toolkit (higher order so-called I-MMSE identities for the Gaussian channel). 

Recall that we defined (see~\eqref{eqn:diffusionLoss})
\[
\mathcal{L}(\theta)
\triangleq
\sum_{t=2}^T
\E_{q(x_t,x_0)}\!\left[
  \KL\!\left(q(x_{t-1}\mid x_t,x_0)\,\middle\|\,p_\theta(x_{t-1}\mid x_t)\right)
\right]
\]
and note that if $p_\theta$ were not constrained, the optimizer in~\eqref{eqn:diffusionLoss} is simply $p_\theta(x_{t-1}|x_t) = q(x_{t-1}|x_t)$. We define the optimal diffusion loss $\mathcal{L}^*$ according to 
\[
\mathcal{L}^*
\triangleq
\sum_{t=2}^T
\E_{q(x_t,x_0)}\!\left[
  \KL\!\left(q(x_{t-1}\mid x_t,x_0)\,\middle\|\,q(x_{t-1}\mid x_t)\right)
\right]
\]
Let $\mathcal{L}_t(\theta^*)$ and $\mathcal{L}_t^*$ denote the $t$-th summands of $\mathcal{L}(\theta^*)$ and $\mathcal{L}^*$. Our first observation is that the unconditional path KL can be identified with the difference between $\mathcal{L}(\theta)$ and $\mathcal{L}^*$. Specifically:
\begin{proposition}\label{prop:unconditional-path-kl}
    The unconditional path KL satisfies 
\[
\KL\!\left(q(x_{1:T}) \,\middle\|\, p_\theta(x_{1:T})\right)
=
\mathcal{L}(\theta) 
- \mathcal{L}^*
+ \KL\!\left(q(x_T) \,\middle\|\, \N(0,I) \right)
\]
\end{proposition}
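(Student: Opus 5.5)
We will prove the identity by factorizing both path measures as reverse-time Markov chains and applying the chain rule for KL divergence. Under the forward process, $q(x_{0:T})$ is Markov, so its marginal on $x_{1:T}$ is also Markov in reverse time:
\[
q(x_{1:T}) = q(x_T)\prod_{t=2}^T q(x_{t-1}\mid x_t).
\]
By definition of the learned model,
\[
p_\theta(x_{1:T}) = \N(x_T;0,I)\prod_{t=2}^T p_\theta(x_{t-1}\mid x_t).
\]
The chain rule for KL then gives
\[
\KL\!\left(q(x_{1:T})\,\middle\|\,p_\theta(x_{1:T})\right)
= \KL\!\left(q(x_T)\,\middle\|\,\N(0,I)\right)
+ \sum_{t=2}^T \E_{q(x_t)}\!\left[\KL\!\left(q(x_{t-1}\mid x_t)\,\middle\|\,p_\theta(x_{t-1}\mid x_t)\right)\right].
\]
The reverse Markov property holds because the forward chain is Markov; conditioning $x_{t-1}$ on $x_{t:T}$ equals conditioning on $x_t$ alone.

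It remains to show, for each $t\ge 2$,
\[
\mathcal{L}_t(\theta) - \mathcal{L}_t^*
= \E_{q(x_t)}\!\left[\KL\!\left(q(x_{t-1}\mid x_t)\,\middle\|\,p_\theta(x_{t-1}\mid x_t)\right)\right].
\]
Write both KLs in $\mathcal{L}_t(\theta)$ and $\mathcal{L}_t^*$ as expectations over $q(x_{t-1},x_t,x_0)$ of log-ratios. The common term $\log q(x_{t-1}\mid x_t,x_0)$ cancels in the difference, leaving
\[
\mathcal{L}_t(\theta)-\mathcal{L}_t^*
= \E_{q(x_{t-1},x_t,x_0)}\!\left[\log \frac{q(x_{t-1}\mid x_t)}{p_\theta(x_{t-1}\mid x_t)}\right].
\]
The integrand does not depend on $x_0$, so we integrate $x_0$ out. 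This gives $\E_{q(x_{t-1},x_t)}[\cdot]$, which is exactly the expected conditional KL above. Summing over $t=2,\dots,T$ and substituting into the chain-rule decomposition yields the proposition.

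The only delicate step is justifying the subtraction, i.e.\ making sure each term is finite so no $\infty-\infty$ arises. We will handle this by noting that $q(x_0)$ has bounded support. Hence $q(x_{t-1}\mid x_t)$ has finite second moments, and $q(x_{t-1}\mid x_t,x_0)$ is a nondegenerate Gaussian. Provided $\Sigma_\theta$ is positive definite, all log-densities involved are then integrable. In particular $\mathcal{L}_t^*$, the conditional mutual information $I(x_{t-1};x_0\mid x_t)$, is finite, which makes the cancellation legitimate. If $\mathcal{L}(\theta)=\infty$, both sides are infinite and the identity holds trivially.
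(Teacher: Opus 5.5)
Your proposal is correct and follows the paper's proof essentially step for step. Both factorize $q(x_{1:T})$ and $p_\theta(x_{1:T})$ as reverse-time Markov chains, split off the terminal term $\KL(q(x_T)\|\N(0,I))$, and identify $\mathcal{L}_t(\theta)-\mathcal{L}_t^*$ with $\E_{q(x_{t-1},x_t)}[\log q(x_{t-1}\mid x_t)/p_\theta(x_{t-1}\mid x_t)]$ by cancelling $\log q(x_{t-1}\mid x_t,x_0)$ and integrating out $x_0$. Your finiteness argument goes beyond the paper, which does not address the $\infty-\infty$ issue; the fact actually needed is $\mathcal{L}_t^*<\infty$, together with your separate handling of $\mathcal{L}(\theta)=\infty$, since integrability of $\log p_\theta$ is not automatic for arbitrary $\mu_\theta,\Sigma_\theta$.
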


We introduce some further (standard) notation. Let $\bar{\alpha}_t = \Pi_{s=1}^t 1-\beta_s$. Then, it follows by definition of the forward process~\eqref{eqn:forwardKernel} that $x_t = \sqrt{\bar{\alpha}_t}x_0 + \sqrt{1-\bar{\alpha}_t}Z$ where $Z \sim \N(0,I)$ is independent of $x_0$. Define the signal-to-noise ratio (SNR) of $x_t$ as ${\rm SNR}_t = {\bar{\alpha}_t}/{1-\bar{\alpha}_t}$ and denote by $\gamma_t = \text{SNR}_t - \text{SNR}_{t+1}$ the difference of consecutive ${\rm SNR}_t$. We can think of $\gamma_t = O(1/T)$ under common step-size schedules. We can now state our main result: 

\begin{theorem}\label{thm:mainTheorem}
    Assuming that $\norm{x_0} \leq D$ and $\gamma_t \leq \frac{C}{T}$ for some constant $C > 0$ then 
    \[
    \KL\!\left(q(x_{1:T}) \,\middle\|\, p_\theta(x_{1:T})\right) - \KL\!\left(q(x_T) \,\middle\|\, \N(0,I) \right) = \mathcal{L}(\theta^*) - \mathcal{L}^* \leq \frac{C^3D^6}{2T^2}
    \]
\end{theorem}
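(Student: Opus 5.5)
The plan is to reduce the gap $\mathcal{L}(\theta^*)-\mathcal{L}^*$ to a sum of one-step quantities. Each one-step quantity is a Gaussian log-determinant at a single SNR level, compared with an increment of the Gaussian-channel mutual information. The first two orders in $\gamma_t$ should then cancel by the I-MMSE identity and its second-order analogue, leaving a cubic remainder per step. The first equality in Theorem~\ref{thm:mainTheorem} is exactly Proposition~\ref{prop:unconditional-path-kl}, so only the inequality needs proof.

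\textbf{Step 1 (closed form for $\mathcal{L}_t(\theta^*)$).} The kernel $q(x_{t-1}\mid x_t,x_0)=\N(m_t(x_t,x_0),\tilde\beta_t I)$ is Gaussian, with mean $m_t$ affine in $x_0$. Minimizing $\E\,\KL(q(x_{t-1}\mid x_t,x_0)\,\|\,\N(\mu,\Sigma))$ pointwise in $x_t$ forces moment matching:
\[
\mu^*=\E[m_t\mid x_t],\qquad \Sigma^*=\tilde\beta_t I+\Cov(m_t\mid x_t).
\]
Substituting these into the Gaussian KL formula, the trace and quadratic terms combine to exactly $d$, which gives
\[
\mathcal{L}_t(\theta^*)=\tfrac12\,\E\log\det\!\big(I+\tilde\beta_t^{-1}\Cov(m_t\mid x_t)\big).
\]
Writing $m_t$ in terms of $x_0$, I expect $\tilde\beta_t^{-1}\Cov(m_t\mid x_t)=\gamma_{t-1}\,\Cov(x_0\mid x_t)$, possibly up to a bookkeeping factor I will verify using the SNR parametrization. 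Here $\gamma_{t-1}=\mathrm{SNR}_{t-1}-\mathrm{SNR}_t$ is the SNR gap between consecutive steps.

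\textbf{Step 2 (closed form for $\mathcal{L}_t^*$).} Because $x_0\to x_{t-1}\to x_t$ is Markov, the chain rule gives
\[
\E\,\KL\big(q(x_{t-1}\mid x_t,x_0)\,\|\,q(x_{t-1}\mid x_t)\big)=I(x_0;x_{t-1}\mid x_t)=I(x_0;x_{t-1})-I(x_0;x_t).
\]
Let $I(s)=I(x_0;\sqrt{s}\,x_0+Z)$. Then $\mathcal{L}_t^*=I(s+\gamma)-I(s)$ with $s=\mathrm{SNR}_t$ and $\gamma=\gamma_{t-1}$.

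\textbf{Step 3 (matching expansions).} Write $K=\Cov(x_0\mid x_t)$, so that $\|K\|_{op}\le D^2$. For the log-determinant I will use the scalar inequality $\log(1+x)\le x-\tfrac{x^2}{2}+\tfrac{x^3}{3}$ on each eigenvalue, which gives
\[
\mathcal{L}_t(\theta^*)\le \tfrac{\gamma}{2}\E\tr K-\tfrac{\gamma^2}{4}\E\tr K^2+\tfrac{\gamma^3}{6}\E\tr K^3.
\]
For $I$ I will take a Taylor expansion around $s$:
\begin{itemize}
\item the I-MMSE identity gives $I'(s)=\tfrac12\E\tr K$;
\item the second-order formula of Payar\'o--Palomar / Nguyen gives $I''(s)=-\tfrac12\E\tr K^2$;
\item the third derivative is a combination of third-moment tensor terms, which I will bound in absolute value by a constant times $\E\tr K^3\le dD^6$ or by an analogous $D^6$ bound.
\end{itemize}
The terms of order $\gamma$ and $\gamma^2$ then cancel exactly. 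The difference $\mathcal{L}_t(\theta^*)-\mathcal{L}_t^*$ is therefore at most $c\,\gamma^3D^6$, where the cubic term comes from the $\log$-inequality together with the Lagrange remainder of $I$.

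\textbf{Step 4 (summation).} Summing over the $T-1$ steps and using $\gamma\le C/T$ gives a total of at most $T\cdot c\,C^3D^6/T^3=cC^3D^6/T^2$. Tracking constants should yield $c=\tfrac12$.

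\textbf{Main obstacle.} The hardest step is the third-order control of $I(s)$. It requires differentiating $\E\tr\Cov(x_0\mid Y_s)^2$ in $s$ and bounding the resulting conditional third-cumulant terms uniformly in $s$ by $D^6$, without picking up a dimension factor beyond what the stated bound allows. I will first try to write $I(s+\gamma)-I(s)-\gamma I'(s)-\tfrac{\gamma^2}{2}I''(s)=\int_s^{s+\gamma}\tfrac{(s+\gamma-u)^2}{2}I'''(u)\,du$. I will then use the known representation of $I'''$ through conditional cumulants, $\E\|\kappa_3(x_0\mid Y_u)\|^2$ together with $\E\tr K_u^3$, and bound each piece via $\|x_0-\E[x_0\mid Y]\|\le 2D$. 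A second subtlety is the exact SNR bookkeeping in Step 1, namely that $\tilde\beta_t^{-1}\Cov(m_t\mid x_t)$ equals precisely $\gamma_{t-1}\Cov(x_0\mid x_t)$. The cancellation of the $\gamma^2$ terms depends on these two quantities agreeing exactly.
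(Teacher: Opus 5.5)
Your proposal is correct. Steps 1, 3 and 4 coincide with the paper's argument:
\begin{itemize}
\item the same pointwise moment matching gives $\mathcal{L}_t(\theta^*)=\tfrac12\E\log\det(I+\gamma\Cov(x_0\mid x_t))$;
\item the same truncation $\log(1+u)\le u-u^2/2+u^3/3$ handles that side;
\item the same I-MMSE, Payar\'o--Palomar and Nguyen et al.\ formulas give matching first- and second-order terms for $\mathcal{L}_t^*$, plus a third-derivative remainder.
\end{itemize}
Your SNR bookkeeping is right. The factor is $\mathrm{SNR}_{t-1}-\mathrm{SNR}_t$, which the paper writes as $\gamma_t$ although by its own definition it is $\gamma_{t-1}$. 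Since the hypothesis holds at every index, nothing changes.

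The genuine difference is Step 2. The paper constructs an auxiliary variable $U$ to show $\mathcal{L}_t^*=\E_{q(x_t)}[I(X_{0|t};\sqrt{\gamma}X_{0|t}+Z)]$. This is a channel whose input is the posterior $q(x_0\mid x_t)$, and the paper expands it at SNR $0$, where $g'(0)$ and $g''(0)$ are immediate because the output is independent of the input. You instead use the chain rule and the Markov property to write $\mathcal{L}_t^*=I(x_0;x_{t-1}\mid x_t)=I(\mathrm{SNR}_{t-1})-I(\mathrm{SNR}_t)$ for the single unconditional curve $I(s)=I(x_0;\sqrt{s}\,x_0+Z)$. You then expand around $s=\mathrm{SNR}_t>0$, where $\Cov(x_0\mid Y_s)=\Cov(x_0\mid x_t)$. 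Observing $x_t$ plus an independent channel of SNR $\gamma$ is equivalent to a single observation at SNR $s+\gamma$. Hence the paper's function of $\gamma$, averaged over $x_t$, equals $I(s+\gamma)-I(s)$, and the Taylor coefficients and remainder are literally the same objects.

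Each route buys something different. Yours is more elementary: it needs no explicit $U$, it evaluates the derivative formulas at an interior SNR, and it makes $\mathcal{L}^*$ telescope to $I(\mathrm{SNR}_1)-I(\mathrm{SNR}_T)$. The paper's keeps both $\mathcal{L}_t(\theta^*)$ and $\mathcal{L}_t^*$ as functionals of the same posterior $q(x_0\mid x_t)$, pointwise in $x_t$.

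To land exactly on $C^3D^6/(2T^2)$ with no factor of $d$, do not bound $I'''$ in absolute value or through $\|K\|_{\mathrm{op}}$. For the lower bound on $\mathcal{L}_t^*$:
\begin{itemize}
\item drop the nonnegative term $\E\tr\Cov(x_0\mid Y_u)^3$;
\item use $\|\kappa_3(x_0\mid Y_u)\|_F\le\E[\|x_0-\E[x_0\mid Y_u]\|^3\mid Y_u]\le 2D\,\tr\Cov(x_0\mid Y_u)\le 2D^3$, not $(2D)^3$, so that $I'''\ge-2D^6$.
\end{itemize}
On the log-det side, use $\tr K^3\le(\tr K)^3\le D^6$ rather than $dD^6$. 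This gives $\gamma^3D^6(\tfrac16+\tfrac13)=\tfrac12\gamma^3D^6$ per step, and summing over the $T-1$ steps with $\gamma\le C/T$ finishes the proof.
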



Our proof follows by computing the second-order Taylor expansion of $\mathcal{L}_t^*$ and $\mathcal{L}_t(\theta^*)$ with respect to $\gamma_t$. In particular, we show that the first and second order terms of these Taylor expansions are identical. Thus, the difference $\mathcal{L}_t(\theta^*) - \mathcal{L}_t^*$ is $O(\gamma_t^3)$. Taking an (approximately) uniform discretization of $[\text{SNR}_T, \text{SNR}_1]$ over $T-1$ steps such that $\gamma_t = O(1/T)$ yields the $O(1/T^2)$ result. 

\textbf{Taylor Expansion for $\mathcal{L}_t(\theta^*)$: }An analysis of the optimization problem defined by $\mathcal{L}_t(\theta^*)$ shows: 
\[
\mathcal{L}_t(\theta^*) = \frac{1}{2}\E_{q(x_t)}[\log\det(I + \gamma_t \Cov(x_0\mid x_t))]
\] 
Taking the Taylor expansion with respect to $\gamma_t$ and applying Taylor's theorem then yields: 
\begin{lemma} 
\label{le:gaussian_taylor}
    Assuming that $\norm{x_0} \leq D$,
    \[
        \mathcal{L}_t(\theta^*)
        \leq
        \frac{\gamma_t}{2}
        \E_{q(x_t)}
        \left[\tr\left(\Cov(x_0|x_t)\right)
        \right] - 
        \frac{\gamma_t^2}{4}
        \E_{q(x_t)}
        \left[\tr\left(\Cov(x_0|x_t)^2
        \right)\right] + 
        \frac{\gamma_t^3D^6}{6}
   \]
\end{lemma}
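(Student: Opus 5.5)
We take as given the representation stated just above,
\[
\mathcal{L}_t(\theta^*) = \tfrac12\,\E_{q(x_t)}\!\left[\log\det\!\left(I+\gamma_t\,\Cov(x_0\mid x_t)\right)\right].
\]
The lemma then reduces to a pointwise scalar inequality applied to the eigenvalues of the conditional covariance. Fix $x_t$ and write $\Sigma = \Cov(x_0\mid x_t)$. This matrix is symmetric positive semidefinite, with eigenvalues $\lambda_1,\dots,\lambda_d \ge 0$, so
\[
\log\det(I+\gamma_t\Sigma) = \sum_i \log(1+\gamma_t\lambda_i),
\qquad
\tr(\Sigma)=\sum_i\lambda_i,
\qquad
\tr(\Sigma^2)=\sum_i\lambda_i^2 .
\]
It therefore suffices to control each $\log(1+\gamma_t\lambda_i)$ by its second-order Taylor polynomial plus a cubic remainder, and then sum over $i$ and take expectations.

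\textbf{Scalar step.} For $u \ge 0$, Taylor's theorem with Lagrange remainder applied to $g(u)=\log(1+u)$ gives
\[
\log(1+u) = u - \frac{u^2}{2} + \frac{u^3}{3(1+\xi)^3}
\]
for some $\xi\in[0,u]$. Since $(1+\xi)^{-3}\le 1$, this yields $\log(1+u)\le u - u^2/2 + u^3/3$. Setting $u=\gamma_t\lambda_i$, multiplying by $\tfrac12$ and summing gives
\[
\tfrac12\log\det(I+\gamma_t\Sigma)\le \frac{\gamma_t}{2}\tr(\Sigma) - \frac{\gamma_t^2}{4}\tr(\Sigma^2) + \frac{\gamma_t^3}{6}\tr(\Sigma^3).
\]
Equivalently, one can run Taylor's theorem directly on the matrix function $s\mapsto \log\det(I+s\Sigma)$ on $[0,\gamma_t]$. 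Its derivatives at $0$ are $\tr\Sigma$, $-\tr\Sigma^2$ and $2\tr\Sigma^3$ (the last with $\Sigma$ replaced by $(I+s\Sigma)^{-1}\Sigma$ at intermediate $s$). The eigenvalue route is cleaner, so I would present that.

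\textbf{Bounding the cubic term.} It remains to show $\tr(\Sigma^3)\le D^6$ pointwise. Because $\|x_0\|\le D$ almost surely, the conditional covariance satisfies $\Sigma \preceq \E[x_0x_0^\top\mid x_t]$. Hence
\[
\tr(\Sigma)\le \E\!\left[\|x_0\|^2 \mid x_t\right]\le D^2 .
\]
With nonnegative eigenvalues this gives $\sum_i\lambda_i^3\le\left(\sum_i\lambda_i\right)^3\le D^6$. In fact the tighter bound $\lambda_{\max}^2\tr\Sigma\le D^6$ also holds, but either suffices. Taking $\E_{q(x_t)}$ of the pointwise inequality then gives exactly the claimed bound, with the remainder $\gamma_t^3 D^6/6$.

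\textbf{Main obstacle.} The only delicate point is getting the sign of the remainder right, so that the bound holds as an inequality for all $\gamma_t$ rather than only asymptotically. This works because the third derivative of $\log(1+u)$ is positive and bounded by $2$ on $u\ge0$. The rest is bookkeeping, since the eigenvalues are nonnegative and bounded via $\|x_0\|\le D$. I also note that the constant is not sharp; the slack in $(1+\xi)^{-3}\le1$ and in the cubic trace bound is harmless for the $O(1/T^2)$ conclusion of Theorem~\ref{thm:mainTheorem}.
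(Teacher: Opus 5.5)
Your argument is correct and is essentially the paper's Step~2: the same truncation $\log(1+u)\le u-u^2/2+u^3/3$ for $u\ge 0$, applied eigenvalue-wise to $\Cov(x_0\mid x_t)$, followed by the same bound $\tr\Cov(x_0\mid x_t)^3\le(\tr\Cov(x_0\mid x_t))^3\le D^6$. You get the truncation from the Lagrange remainder, whereas the paper uses $h'(u)=u^3/(1+u)\ge 0$. What you take as given is the identity $\mathcal{L}_t(\theta^*)=\tfrac12\E_{q(x_t)}[\log\det(I+\gamma_t\Cov(x_0\mid x_t))]$, which the paper proves inside this lemma (Step~1) and which a self-contained proof would need to include: minimizing the expected Gaussian KL pointwise in $x_t$ gives optimal mean $\E[\tilde\mu_t\mid x_t]$, optimal covariance $\tilde\beta_t I+\Cov(\tilde\mu_t\mid x_t)=\tilde\beta_t(I+\gamma_t\Cov(x_0\mid x_t))$, and minimum value $\tfrac12\log\det(I+\gamma_t\Cov(x_0\mid x_t))$.
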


We turn next to the expansion for $\mathcal{L}^*_t$. 

\textbf{Taylor Expansion for $\mathcal{L}^*_t$: }We begin here with a simple but novel representation of $\mathcal{L}^*_t$ as a Gaussian channel mutual information. Doing so will then let us leverage a powerful information theoretic framework. To that end, consider the Gaussian channel $Y = \sqrt{\gamma_t} X_{0|t} + Z$ where $X_{0|t} \sim q(x_0|x_t)$. We have the following representation of $\mathcal{L}^*_t$:

\begin{proposition}
\label{prop:gaussian_channel_representation}
\[
\mathbb{E}_{q(x_t)}
\left[
I(X_{0|t};\sqrt{\gamma_t}X_{0|t}+Z)
\right]
=
\mathcal{L}^*_t
\]
\end{proposition}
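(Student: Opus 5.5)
We prove the identity by writing $\mathcal{L}^*_t$ as a conditional mutual information, $I(x_0; x_{t-1}\mid x_t)$, and then checking that, conditionally on $x_t=x$, the pair $(x_0,x_{t-1})$ is (up to an invertible affine map of $x_{t-1}$) the Gaussian channel $Y=\sqrt{\gamma_t}X_{0|t}+Z$ with input law $q(x_0\mid x_t=x)$.

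\textbf{Step 1 (mutual information form).} For fixed $x_t$, average the integrand of $\mathcal{L}^*_t$ over $x_0\sim q(x_0\mid x_t)$. The mixture identity gives
\[
\E_{q(x_0\mid x_t)}\KL\!\left(q(x_{t-1}\mid x_t,x_0)\,\middle\|\,q(x_{t-1}\mid x_t)\right)=I(x_0;x_{t-1}\mid x_t=x_t),
\]
because $q(x_{t-1}\mid x_t)=\int q(x_{t-1}\mid x_t,x_0)\,q(x_0\mid x_t)\,dx_0$. Hence $\mathcal{L}^*_t=\E_{q(x_t)}\bigl[I(x_0;x_{t-1}\mid x_t)\bigr]$.

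\textbf{Step 2 (channel identification).} The forward posterior is Gaussian,
\[
q(x_{t-1}\mid x_t,x_0)=\N\!\left(a_t x_0+b_t x_t,\;\tilde\beta_t I\right),\qquad a_t=\frac{\sqrt{\bar\alpha_{t-1}}\beta_t}{1-\bar\alpha_t},\quad \tilde\beta_t=\frac{(1-\bar\alpha_{t-1})\beta_t}{1-\bar\alpha_t}.
\]
Conditional on $x_t=x$, we can therefore write $x_{t-1}=a_t X_{0|t}+b_t x+\sqrt{\tilde\beta_t}Z$, where $X_{0|t}\sim q(x_0\mid x_t=x)$ and $Z$ is independent of $X_{0|t}$. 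This independence holds because the fresh noise in the posterior kernel does not depend on $x_0$ given $x_t$. Mutual information is invariant under invertible maps applied to the output. Subtract $b_t x$ and divide by $\sqrt{\tilde\beta_t}$ to get
\[
I(x_0;x_{t-1}\mid x_t=x)=I\!\left(X_{0|t};\;\tfrac{a_t}{\sqrt{\tilde\beta_t}}X_{0|t}+Z\right).
\]

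\textbf{Step 3 (the algebra, the main obstacle).} It remains to verify $a_t^2/\tilde\beta_t=\gamma_t$. Note that $\gamma_t$ as written, $\mathrm{SNR}_t-\mathrm{SNR}_{t+1}$, is the gap relevant to step $t+1$. Consistent with its use in Lemma~\ref{le:gaussian_taylor}, I will interpret the increment attached to $\mathcal{L}_t$ as the SNR gap between $x_{t-1}$ and $x_t$, i.e.\ $\mathrm{SNR}_{t-1}-\mathrm{SNR}_t$, and fix this indexing explicitly. The computation is
\[
\frac{a_t^2}{\tilde\beta_t}=\frac{\bar\alpha_{t-1}\beta_t}{(1-\bar\alpha_t)(1-\bar\alpha_{t-1})}.
\]
On the other side,
\[
\mathrm{SNR}_{t-1}-\mathrm{SNR}_t=\frac{\bar\alpha_{t-1}(1-\bar\alpha_t)-\bar\alpha_t(1-\bar\alpha_{t-1})}{(1-\bar\alpha_{t-1})(1-\bar\alpha_t)}=\frac{\bar\alpha_{t-1}-\bar\alpha_t}{(1-\bar\alpha_{t-1})(1-\bar\alpha_t)},
\]
and $\bar\alpha_{t-1}-\bar\alpha_t=\bar\alpha_{t-1}\beta_t$, so the two expressions agree.

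Taking $\E_{q(x_t)}$ of the per-$x_t$ identity from Step 2 then gives the proposition. The only point requiring care is this index convention for $\gamma_t$; the rest is bookkeeping.
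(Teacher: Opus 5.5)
Your proof is correct and takes the same route as the paper. You represent $x_{t-1}$ given $x_t$ as an affine image of the Gaussian channel output $\sqrt{\gamma_t}X_{0|t}+Z$, identify $\mathcal{L}^*_t$ with $\E_{q(x_t)}[I(x_0;x_{t-1}\mid x_t)]$ via $I(X;Y)=\E_X\KL(P_{Y\mid X}\|P_Y)$, and use invariance of mutual information under invertible maps. Your explicit check that $a_t^2/\tilde\beta_t=\bar\alpha_{t-1}\beta_t/[(1-\bar\alpha_t)(1-\bar\alpha_{t-1})]=\mathrm{SNR}_{t-1}-\mathrm{SNR}_t$ is the identity the paper relies on. It also correctly resolves the paper's off-by-one in the stated definition $\gamma_t=\mathrm{SNR}_t-\mathrm{SNR}_{t+1}$, and your noise term $\sqrt{\tilde\beta_t}Z$ fixes the paper's typo, which writes $\tilde\beta_t Z$ in its definition of $U$.
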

\begin{proof}
Define
\[
U = 
\frac{\sqrt{\bar \alpha_t}(1-\bar{\alpha}_{t-1})}{1-\bar{\alpha}_t}
x_t
+
\frac{\sqrt{\bar{\alpha}_{t-1}}\beta_t}{1-\bar{\alpha}_t}
X_{0|t}
+ 
\tilde{\beta}_tZ
\]
where $Z \sim \N(0,I)$ and is independent of $X_{0|t}$ and $\tilde{\beta}_t = (1-\bar{\alpha}_{t-1})\beta_t / (1-\bar{\alpha}_t)$. Conditioned on $X_{0|t} = x_0$, $U$ is distributed according to $q(x_{t-1} \mid x_t, x_0)$. Marginalizing over $q(x_0 \mid x_t)$ it follows that $U\sim q(x_{t-1}\mid x_t)$.Then the mutual information KL divergence identity yields
\[
I(X_{0|t};U) = \E_{q(x_0\mid x_t)}[\KL(q(x_{t-1}\mid x_t,x_0) || q(x_{t-1}\mid x_t))]
\]
Now we observe that 
$
({U - a_t x_t})/{\tilde{\beta}^{1/2}_t}
= 
\sqrt{\gamma_t}X_{0|t} + Z. 
$
where $a_t = ({\sqrt{\bar \alpha_t}(1-\bar{\alpha}_{t-1})})/({1-\bar{\alpha}_t})$. 
Hence, $I(X_{0|t};U) = I(X_{0|t};\sqrt{\gamma_t}X_{0|t}+Z)$ by invariance under invertible transformations. Then


\[
I(X_{0|t};\sqrt{\gamma_t}X_{0|t} + Z) = \E_{q(x_0\mid x_t)}[\KL(q(x_{t-1}\mid x_t,x_0) || q(x_{t-1}\mid x_t))]
\]
and taking expectations over $x_t$ completes the proof. 
\end{proof}

Defining $g(\gamma) = I(X_{0|t};\sqrt{\gamma}X_{0|t}+Z)$ we observe that a Taylor approximation to $g(\gamma)$ around $\gamma = 0$ can be obtained via the I-MMSE identity and higher-order derivative identities with respect to the channel SNR~\cite{guo2005mutual,payaro2009hessian,nguyen2024derivatives}. This allows us to prove:  


\begin{lemma}
\label{le:exact_taylor}
    Assuming that $\norm{x_0} \leq D$,
    \[
        \mathcal{L}^*_t
        \geq 
        \frac{\gamma_t}{2}\E_{q(x_t)}[\tr\Cov(x_0|x_t)] - \frac{\gamma_t^2}{4}\E_{q(x_t)}[\tr(\Cov(x_0|x_t)^2)] - \frac{\gamma_t^3D^6}{3} 
    \]
\end{lemma}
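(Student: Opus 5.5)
We bound each $g(\gamma)$ from below by a Taylor expansion, pointwise in $x_t$, and then average. By Proposition~\ref{prop:gaussian_channel_representation}, $\mathcal{L}^*_t = \E_{q(x_t)}[g(\gamma_t)]$ with $g(\gamma) = I(X_{0|t};\sqrt{\gamma}X_{0|t}+Z)$. Here $X_{0|t}\sim q(x_0\mid x_t)$ is a fixed input distribution supported in the ball of radius $D$.

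Taylor's theorem with Lagrange remainder gives
\[
g(\gamma_t) = g(0) + \gamma_t g'(0) + \tfrac{\gamma_t^2}{2} g''(0) + \tfrac{\gamma_t^3}{6} g'''(\xi), \qquad \xi\in(0,\gamma_t).
\]
Clearly $g(0)=0$. The I-MMSE identity \cite{guo2005mutual} in the vector form $g'(\gamma) = \tfrac12 \E[\tr \Cov(X\mid Y_\gamma)]$ gives $g'(0) = \tfrac12 \tr\Cov(x_0\mid x_t)$, since at $\gamma=0$ the output carries no information. Writing $\mathrm{mmse}(\gamma)=\E\tr\Cov(X\mid Y_\gamma)$, we need its first two derivatives.

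For those derivatives we use the Hessian/derivative formulas of \cite{payaro2009hessian,nguyen2024derivatives}. The first is
\[
\frac{d}{d\gamma}\,\mathrm{mmse}(\gamma) = -\E\big[\tr\big(\Cov(X\mid Y_\gamma)^2\big)\big].
\]
At $\gamma=0$ this equals $-\tr(\Cov(x_0\mid x_t)^2)$, so $g''(0) = -\tfrac12\tr(\Cov(x_0\mid x_t)^2)$. This reproduces the first two terms of the claimed bound exactly, and it matches the corresponding terms in Lemma~\ref{le:gaussian_taylor}, as the outlined proof strategy requires.

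The main obstacle is a uniform bound on $g'''$ for all $\gamma\in[0,\gamma_t]$. Differentiating once more gives $g'''(\gamma) = \E[\tr(\Cov(X\mid Y_\gamma)^3)] - \tfrac12\E[\tr(\ldots)]$, where the second term involves the conditional third-order cumulant tensor. Per \cite{nguyen2024derivatives} it has the form
\[
g'''(\gamma) = \E\big[\tr(\Cov^3)\big] - \tfrac12 \E\big[\norm{\kappa_3(X\mid Y_\gamma)}_F^2\big]
\]
up to the exact constants, where $\kappa_3$ denotes that third-order cumulant tensor. We only need a lower bound, $g'''(\xi) \ge -2D^6$. Since $\norm{X}\le D$ almost surely under every posterior, $\Cov(X\mid Y)\preceq D^2 I$ in the relevant sense. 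Centering around the conditional mean gives $\norm{X-\E[X\mid Y]}\le 2D$, so the centered third-moment tensor has squared Frobenius norm at most $\E\norm{X-\E[X\mid Y]}^6$. Combining these with the nonnegativity of $\tr(\Cov^3)$ should give $|g'''|\le 2D^6$, or at least $g'''\ge -2D^6$. Getting the exact constant $2$ (equivalently $\tfrac{\gamma_t^3D^6}{3}$ after dividing by $6$) requires careful bookkeeping. I would first try bounding the cumulant term via $\E[\langle X-m,X'-m\rangle^3]$ with $X,X'$ i.i.d.\ from the posterior, using $|\langle u,v\rangle|\le \norm{u}\norm{v}$. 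If the crude $2D$ centering bound loses a constant factor, the fallback is to exploit that $\tr(\Cov^3)\ge0$ can be dropped and to bound the negative term directly by $\E\norm{X}^6$-type quantities relative to the mean.

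Finally, we plug $g(0)$, $g'(0)$, $g''(0)$ and $g'''(\xi)\ge -2D^6$ into the Taylor expansion pointwise in $x_t$. Taking the expectation over $q(x_t)$ then yields
\[
\mathcal{L}^*_t \ge \frac{\gamma_t}{2}\E_{q(x_t)}[\tr\Cov(x_0\mid x_t)] - \frac{\gamma_t^2}{4}\E_{q(x_t)}[\tr(\Cov(x_0\mid x_t)^2)] - \frac{\gamma_t^3D^6}{3},
\]
which is the statement of the lemma.
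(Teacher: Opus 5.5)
You have the same skeleton as the paper: a Lagrange-remainder Taylor expansion of $g$ at $0$, I-MMSE for $g'(0)$, the Payar\'o--Palomar formula for $g''(0)$, the third-derivative formula of Nguyen et al., and dropping $\tr(\Cov^3)\ge 0$. The $g(0)$, $g'(0)$, $g''(0)$ steps and the final averaging over $x_t$ are fine. The gap is in the one step that produces the stated constant: you do not carry it out, and the bound you actually write down does not give it.

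First, the coefficient in the third-derivative formula matters, so it cannot be left ``up to the exact constants''. The formula is $g'''(\gamma)=\E[\tr(\Cov(X\mid Y_\gamma)^3)-\tfrac12\norm{T_\gamma}_F^2]$, where $T_\gamma=\E[\bar X^{\otimes 3}\mid Y_\gamma]$ and $\bar X=X-\E[X\mid Y_\gamma]$. In the scalar case this is the known $\mathrm{mmse}''=\E[2M_2^3-M_3^2]$. Second, with that coefficient, your estimate $\norm{T_\gamma}_F^2\le\E[\norm{\bar X}^6\mid Y_\gamma]\le(2D)^6$ only yields $g'''\ge-32D^6$. That gives a remainder of $-\tfrac{16}{3}\gamma_t^3D^6$, not the claimed $-\tfrac13\gamma_t^3D^6$.

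The missing idea is to apply Jensen to the norm before squaring, and to use the crude bound $\norm{\bar X}\le 2D$ on only one factor:
\[
\norm{T_\gamma}_F\le\E[\norm{\bar X}^3\mid Y_\gamma]\le 2D\,\E[\norm{\bar X}^2\mid Y_\gamma]=2D\bigl(\E[\norm{X}^2\mid Y_\gamma]-\norm{\E[X\mid Y_\gamma]}^2\bigr)\le 2D^3 .
\]
The last step holds because the conditional variance is at most $D^2$, not $(2D)^2$. Then $\norm{T_\gamma}_F^2\le 4D^6$, so $g'''\ge-2D^6$, which gives exactly the remainder $-\gamma_t^3D^6/3$.

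Your i.i.d.-copy fallback also works, but only if organized the same way:
\[
\norm{T_\gamma}_F^2=\E[\langle\bar X,\bar X'\rangle^3\mid Y_\gamma]\le 4D^2\,\E[\langle\bar X,\bar X'\rangle^2\mid Y_\gamma]=4D^2\tr(\Cov(X\mid Y_\gamma)^2)\le 4D^6 .
\]
If instead you bound $|\langle\bar X,\bar X'\rangle|^3\le\norm{\bar X}^3\norm{\bar X'}^3$ and then use $2D$ on every factor, you again lose a factor of $16$.
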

 
Lemmas~\ref{le:gaussian_taylor} and~\ref{le:exact_taylor} together yield the proof of Theorem~\ref{thm:mainTheorem}.

\textbf{Path KL and conditional generation.} We close by noting that the path KL studied in this section is not only a natural measure of reverse-process accuracy, but also has direct implications for conditional generation procedures that modify the reverse trajectory. A canonical example is classifier guidance, where errors accumulated along the path can translate into error in the conditional endpoint distribution. In Appendix~\ref{sec:classifier_guidance}, we make this connection formal by showing that the unconditional path KL controls the expected conditional endpoint KL under exact classifier guidance, see Proposition~\ref{prop:classifier_guidance}. In Appendix~\ref{app:counter-example}, we show through a simple Gaussian example, see Theorem~\ref{thm:cg-counterexample-constant} that with a scalar reverse covariance, the conditional endpoint KL can retain a $\Omega(1/T)$, whereas the corresponding full-covariance reverse process achieves the faster $O(1/T^2)$ rate predicted by our Theorem~\ref{thm:mainTheorem}.

%% file: sections/lanczos.tex
\section{The Lanczos Gaussian Sampler}
\label{sec:lanczos}

A sample from the reverse kernel~\eqref{eqn:reverseKernel} can simply be generated by computing $\mu_\theta(x_t,t) + \Sigma_\theta(x_t,t)^{1/2} Z$. Our focus in this section is on computing  $\Sigma_\theta(x_t,t)^{1/2} Z$ efficiently. Recall from Section~\ref{sec:lanczos_setup} that 
$
\Sigma_{\theta}(x_t,t) 
=
({\beta_t}/{\sqrt{1-\beta_t}})
\nabla_x \mu_{\theta}(x_t,t)
$. Consequently, for any vector $v$, we have 
\[
\Sigma_\theta(x_t,t) v
=
\frac{\beta_t}{\sqrt{1-\beta_t}}
\nabla_x \mu_{\theta}(x_t,t)^\top v
\]
the latter of which can be computed efficiently with $O(d)$ storage. Succinctly then, our problem is: {\em how can we compute $\Sigma_\theta(x_t,t)^{1/2} Z$ given access to an oracle that can efficiently compute $\Sigma_\theta(x_t,t) x$ for any vector $x$?} To this end, we present a subroutine that given as input a vector $v$ and an oracle that can compute the product $A x$ for any vector $x$ (where $A$ is psd), returns an approximation to $A^{1/2}v$. 


\begin{algorithm}[h]
\caption{Krylov-Lanczos approximation to $A^{1/2}v$}
\label{alg:krylov-sqrt}
\begin{algorithmic}[1]
\Require vector $v$, Krylov dimension $m$, matvec oracle $x\mapsto Ax$
\Ensure approximation $y_m \approx A^{1/2}v$
\State Compute the Krylov set $\mathcal K_m(A,v) = \{v,Av,\dots,A^{m-1}v\}$. 
\State Construct an orthonormal basis
$
Q_m = \textsc{GramSchmidt}\big(\mathcal K_m(A,v)\big)
$
\State Form the projected matrix
$
T_m \gets Q_m^\top A Q_m .
$
\State Return
$
y_m \gets \|v\|_2 Q_m T_m^{1/2} e_1 .
$
\end{algorithmic}
\end{algorithm}
 
 Algorithm~\ref{alg:krylov-sqrt} takes as input an iteration budget $m$. Note that the matrices $T_m$ and $Q_m$ have dimension $m \times m$ and $d \times m$ respectively and so for small $m$ the algorithm is extremely practical and efficient. It is worth noting that since $A \succeq 0$ it turns out that $T_m$ is in fact a tri-diagonal matrix and can be computed via an extremely efficient (Lanczos) iteration; see Appendix~\ref{app:lanczos}. As such it turns out that an efficient implementation of Algorithm~\ref{alg:krylov-sqrt} requires precisely $m$ calls of the matvec oracle. We next analyze the approximation error afforded by the use of Algorithm~\ref{alg:krylov-sqrt}. Denote the output of Algorithm~\ref{alg:krylov-sqrt} by $y_m(A,v)$; we have:  

\begin{theorem}
\label{thm:lanczos}
Assume that $\|x_0\|\leq D$, $\tilde{\beta}_t \leq 1$, and $\gamma_t \leq C/T$ for some constant $C > 0$ and $T > D^2 C$. Then,
\[
\| y_m( \Sigma_{\theta^*}(x_t,t),Z) - \Sigma_{\theta^*}(x_t,t)^{1/2} Z \|
\leq 
4\sqrt{2\tilde{\beta}_t}(\sqrt{3} -1) \|Z\| 3^{-m} 
\leq
4\sqrt{2}(\sqrt{3} -1) \|Z\| 3^{-m} 
\]
\end{theorem}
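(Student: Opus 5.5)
The plan is to reduce Theorem~\ref{thm:lanczos} to a uniform polynomial approximation problem for $f(x)=\sqrt{x}$ on the spectrum of $\Sigma_{\theta^*}(x_t,t)$, and to show that this spectrum lies in an interval of condition number at most $2$.

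\textbf{Step 1 (spectral localization).} From the construction in the proof of Proposition~\ref{prop:gaussian_channel_representation}, $q(x_{t-1}\mid x_t)$ is the law of
\[
U=a_t x_t+\sqrt{\tilde\beta_t}\,\bigl(\sqrt{\gamma_t}X_{0|t}+Z\bigr).
\]
Under the expressivity assumption, $\Sigma_{\theta^*}(x_t,t)$ equals the true reverse covariance $\Cov(x_{t-1}\mid x_t)$, so
\[
\Sigma_{\theta^*}(x_t,t)=\tilde\beta_t\bigl(I+\gamma_t\Cov(x_0\mid x_t)\bigr).
\]
Since $\norm{x_0}\le D$, we have $0\preceq \Cov(x_0\mid x_t)\preceq \E[x_0x_0^\top\mid x_t]\preceq D^2 I$. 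The hypotheses $\gamma_t\le C/T$ and $T>D^2C$ give $\gamma_tD^2<1$. Hence every eigenvalue of $\Sigma_{\theta^*}$ lies in $[\tilde\beta_t,2\tilde\beta_t]$.

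\textbf{Step 2 (Lanczos-to-polynomial reduction).} Algorithm~\ref{alg:krylov-sqrt} is exact on polynomials of degree $<m$: $\norm{v}Q_mp(T_m)e_1=p(A)v$ for $\deg p<m$. Also, the eigenvalues of $T_m=Q_m^\top AQ_m$ interlace within $[\lambda_{\min}(A),\lambda_{\max}(A)]$. Together these give the standard bound (cf.\ \citet{musco2018stability}, in exact arithmetic)
\[
\norm{y_m(A,v)-A^{1/2}v}\le 2\norm{v}\min_{\deg p<m}\ \max_{x\in[\lambda_{\min},\lambda_{\max}]}|p(x)-\sqrt{x}|.
\]
The algorithm is homogeneous: replacing $A$ by $cA$ leaves $Q_m$ unchanged and scales $T_m^{1/2}$ by $\sqrt c$. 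So I factor out $\sqrt{\tilde\beta_t}$ and approximate $\sqrt{x}$ on $[1,2]$ instead.

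\textbf{Step 3 (Chebyshev/Bernstein estimate — the main computation).} Write $x=\tfrac32+\tfrac u2$ with $u\in[-1,1]$. The function $g(u)=\sqrt{3/2+u/2}$ is analytic except on the branch cut $u\le -3$. Its nearest singularity $u=-3$ corresponds to the Bernstein ellipse parameter $\rho_{\max}=3+2\sqrt2$.

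I take the smaller ellipse $E_\rho$ with $\rho=3$. Its real semi-axis is $(\rho+\rho^{-1})/2=5/3<3$, so $g$ is analytic inside it, and there $|g|\le M$ with $M^2\le \tfrac32+\tfrac12\cdot\tfrac53=\tfrac73$ (or a slightly cruder constant if convenient). Bernstein's theorem then gives
\[
\min_{\deg p<m}\norm{p-g}_\infty\le \frac{2M\rho^{-(m-1)}}{\rho-1}=3M\cdot 3^{-m}.
\]
Multiplying by $2\norm{Z}\sqrt{\tilde\beta_t}$ yields a bound of the form $c\sqrt{\tilde\beta_t}\,\norm{Z}\,3^{-m}$. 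The second inequality in the theorem follows from $\tilde\beta_t\le1$.

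\textbf{Expected obstacle.} The work lies in matching the stated constant $4\sqrt2(\sqrt3-1)$. This requires choosing the ellipse and the bound on $M$ carefully, or alternatively bounding the Chebyshev coefficients of $\sqrt{x}$ directly. The structural argument (Steps 1–2) is routine. For the constant, I would first try the explicit Chebyshev expansion of $\sqrt{1+x}$ with tail summation to see whether $(\sqrt3-1)$ emerges naturally, and fall back on Bernstein's bound with $\rho=3$ otherwise.
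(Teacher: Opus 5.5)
Your Steps~1 and~2 are correct and follow the same skeleton as the paper's proof: spectral localization via $\Sigma_{\theta^*}=\tilde\beta_t(I+\gamma_t\Cov(x_0\mid x_t))$, the exact-arithmetic MMS18 reduction, then a Bernstein-ellipse estimate. The gap is in Step~3, where your fallback does not prove the inequality as stated. With $\rho=3$ and $M=\sqrt{7/3}$ you get $\norm{y_m-\Sigma_{\theta^*}^{1/2}Z}\le 6\sqrt{7/3}\,\sqrt{\tilde\beta_t}\,\norm{Z}\,3^{-m}\approx 9.17\sqrt{\tilde\beta_t}\norm{Z}3^{-m}$. That is more than twice the claimed constant $4\sqrt2(\sqrt3-1)\approx 4.14$. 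Setting $\rho$ equal to the target rate leaves no slack to absorb the prefactor $2M\rho/(\rho-1)$.

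The missing step, which is what the paper does, is to push $\rho$ to the edge of the analyticity region so that the extra geometric decay pays for the constant. In your normalization, $g(u)=\sqrt{3/2+u/2}$ is analytic in the open ellipse with $\rho^*=3+2\sqrt2=(1+\sqrt2)^2$, whose closure just touches the branch point $u=-3$. On this ellipse $\sup|g|=\sqrt3$, attained toward $u=3$. Bernstein then gives $E_{m-1}:=\min_{\deg p\le m-1}\max_{y\in[1,2]}|\sqrt y-p(y)|\le \frac{2\sqrt3}{\rho^*-1}(\rho^*)^{-(m-1)}=\sqrt3(1+\sqrt2)(3-2\sqrt2)^m$. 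Hence the Lanczos error is at most $2\sqrt3(1+\sqrt2)(3-2\sqrt2)^m\sqrt{\tilde\beta_t}\norm{Z}$, whose ratio to the claimed bound is about $2.02\cdot(0.515)^m<1$ for every $m\ge2$. The case $m=1$ follows from the exact degree-zero value $E_0=(\sqrt2-1)/2$. That gives $(\sqrt2-1)\sqrt{\tilde\beta_t}\norm{Z}\le \tfrac13\cdot 4\sqrt2(\sqrt3-1)\sqrt{\tilde\beta_t}\norm{Z}$.

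It is worth comparing your bookkeeping with the paper's. The paper writes the MMS18 bound as $2\sqrt{\lambda_{\min}}\norm{v}\min_p\max_{[1,\kappa]}|\sqrt x-p(x)|$ and bounds $\lambda_{\min}\le 2\tilde\beta_t$. It embeds $[1,\kappa]$ in $[1,1+\kappa]=[1,3]$, giving limiting $\rho=2+\sqrt3$ and $M=\sqrt{\kappa+2}=2$. It also takes $p$ of degree at most $m$. This lands exactly on $2\sqrt{2\tilde\beta_t}\cdot 2(\sqrt3-1)(2-\sqrt3)^m\le 4\sqrt{2\tilde\beta_t}(\sqrt3-1)3^{-m}$.

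Your degree-$<m$ version is the one that $m$ matvecs actually support, since $\norm{v}Q_m p(T_m)e_1=p(A)v$ holds only for $\deg p\le m-1$. With that degree, the paper's coarser normalization loses a factor $2+\sqrt3$, and its chain of estimates only recovers the stated constant for $m\ge 7$. So your tighter choices are not cosmetic: the interval $[1,2]$ and the prefactor $\sqrt{\tilde\beta_t}$ instead of $\sqrt{2\tilde\beta_t}$, combined with the limiting ellipse, are what make the stated constant attainable for all $m\ge1$.
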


The approximation error of Algorithm~\ref{alg:krylov-sqrt} decreases exponentially in $m$. As a result, in our experiments we get away with setting $m=3$ which in turn boils down to {\em just three additional backward passes through the $\mu_{\theta}(x_t,t)$ network}.

\subsection{Proof of Theorem~\ref{thm:lanczos}}

We invoke the exact-arithmetic Lanczos approximation guarantee of
\citet[Theorem~4.1]{musco2018stability},
specialized to the matrix function $f(A)=A^{1/2}$:


\begin{theorem}
[MMS18]
\label{thm:MMS18}
Let $A$ be positive semi-definite with condition number $\kappa$. Denote by $\mathcal P_m$ the set of polynomials of degree at most $m$. Then for any $v$:
\[
\|y_m(A,v) - A^{1/2}v\| 
\leq
2 \sqrt{\lambda_{\rm min}(A) }\|v\| 
\min_{p \in \mathcal P_m}
\left(
\max_{x \in [1, \kappa]}
|
\sqrt{x} - p(x)
|
\right)
\]
\end{theorem}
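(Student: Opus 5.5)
The plan is to follow the standard argument behind \citet[Theorem~4.1]{musco2018stability}, specialized to $f(x)=\sqrt{x}$. It has three ingredients: polynomial exactness of Lanczos, a spectral containment bound for $T_m$, and a rescaling to the interval $[1,\kappa]$. Since $\kappa<\infty$, we have $\lambda_{\min}(A)>0$, so $A\succ 0$ and $A^{1/2}$ and $T_m^{1/2}$ are well defined. Throughout, write $\lambda_{\min}=\lambda_{\min}(A)$ and $\lambda_{\max}=\lambda_{\max}(A)$.

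\textbf{Step 1: exactness on low-degree polynomials.} Let $Q_m$ be the orthonormal basis of $\mathcal K_m(A,v)$ and $T_m=Q_m^\top A Q_m$. I would show by induction on $k$ that $A^k v=\|v\|\,Q_m T_m^k e_1$ whenever $A^{k-1}v\in\mathcal K_m$, i.e.\ for $k\le m-1$.
\begin{itemize}
\item The base case holds because $v=\|v\|Q_m e_1$.
\item For the inductive step, $Q_mQ_m^\top$ is the orthogonal projector onto $\mathcal K_m$. If $A^{k}v\in\mathcal K_m$, then $A^{k}v=Q_mQ_m^\top A\,(A^{k-1}v)=Q_mQ_m^\top A Q_m(\|v\|T_m^{k-1}e_1)=\|v\|Q_mT_m^ke_1$.
\end{itemize}
By linearity, $p(A)v=\|v\|Q_m p(T_m)e_1$ for every polynomial $p$ of degree below the Krylov dimension.

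There is an indexing mismatch to handle here. The statement uses $\mathcal P_m$, while exactness holds only up to degree $m-1$ for Krylov dimension $m$. I would follow the indexing convention of \citet{musco2018stability}, in which the Krylov space has one more vector than the degree. If that convention is not intended, the statement as written does not follow and should read $\mathcal P_{m-1}$.

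\textbf{Step 2: triangle inequality.} For any admissible $p$, Step 1 lets me replace $p(A)v$ by $\|v\|Q_mp(T_m)e_1$ inside the error. This gives
\[
\|y_m-A^{1/2}v\|\le \|(A^{1/2}-p(A))v\|+\|v\|\,\|Q_m(p(T_m)-T_m^{1/2})e_1\|.
\]
Bound each term separately.
\begin{itemize}
\item The first term is at most $\|v\|\max_{\lambda\in\mathrm{spec}(A)}|\sqrt\lambda-p(\lambda)|$, since $A$ is symmetric.
\item For the second term, $Q_m$ has orthonormal columns, so it is at most $\|v\|\max_{\mu\in\mathrm{spec}(T_m)}|\sqrt\mu-p(\mu)|$.
\end{itemize}
The eigenvalues of $T_m$ are Rayleigh quotients $u^\top Q_m^\top A Q_m u$ with $\|Q_m u\|=1$, so they lie in $[\lambda_{\min},\lambda_{\max}]$ (Cauchy interlacing). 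Both terms are therefore controlled by the same uniform error, and
\[
\|y_m-A^{1/2}v\|\le 2\|v\|\max_{x\in[\lambda_{\min},\lambda_{\max}]}|\sqrt x-p(x)|.
\]

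\textbf{Step 3: rescaling.} Substitute $x=\lambda_{\min}u$ with $u\in[1,\kappa]$. Given any polynomial $q$ of the same degree, set $p(x)=\sqrt{\lambda_{\min}}\,q(x/\lambda_{\min})$; this is again a polynomial of the same degree. Then $|\sqrt{x}-p(x)|=\sqrt{\lambda_{\min}}\,|\sqrt u-q(u)|$. Minimizing over $q$ yields the stated bound with the factor $2\sqrt{\lambda_{\min}(A)}\|v\|$.

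The only delicate point is Step 1. It requires the projection argument to be carried out carefully, and the degree/Krylov-dimension bookkeeping must match the indexing convention noted there. Steps 2 and 3 are routine once Step 1 is in place.
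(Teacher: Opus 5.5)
Your argument is correct. It is the standard exact-arithmetic proof of the bound that the paper imports from \citet[Theorem~4.1]{musco2018stability} without proof: polynomial exactness on the Krylov space, the spectrum of $T_m$ lying in $[\lambda_{\min}(A),\lambda_{\max}(A)]$, the triangle inequality, and rescaling to $[1,\kappa]$. So your route and the paper's coincide.

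The one thing to sharpen is your indexing hedge. It should be resolved against the statement, not by appeal to a convention. Algorithm~\ref{alg:krylov-sqrt} fixes $\mathcal K_m(A,v)=\{v,Av,\dots,A^{m-1}v\}$, so $y_m$ is exact only on polynomials of degree at most $m-1$. With $\mathcal P_m$ the inequality is in fact false. Take $m=1$, $A=\mathrm{diag}(1,4)$ (so $\lambda_{\min}=1$ and $\kappa=4$) and $v=(1,1)^\top/\sqrt2$.
\begin{itemize}
\item Then $T_1=5/2$ and $y_1=\sqrt{5/2}\,v$, so $\|y_1-A^{1/2}v\|=\sqrt{5-3\sqrt{5/2}}\approx 0.51$.
\item The best linear approximation of $\sqrt{x}$ on $[1,4]$ is $p(x)=(x+2)/3+1/24$. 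It equioscillates with error $1/24$ at $x=1,\,9/4,\,4$, so the right-hand side equals $1/12$.
\end{itemize}
Hence the version you actually prove, with $\mathcal P_{m-1}$, is the correct statement. This is consistent with the degree-$<k$ form in which \citet{musco2018stability} state the bound after $k$ iterations. Downstream, it only replaces $3^{-m}$ by $3^{-(m-1)}$ in Theorem~\ref{thm:lanczos}.
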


For brevity we define $\kappa := \kappa(\Sigma_{\theta^*}(x_t,t))$ and $\lambda_{\rm min} := \lambda_{\rm min}(\Sigma_{\theta^*}(x_t,t))$. We start by bounding 
$
\min_{p \in \mathcal P_m}
\left(
\max_{x \in [1, \kappa]}
|
\sqrt{x} - p(x)
|
\right)
$. To do so, we rely on the following classical Bernstein ellipse estimate for Chebyshev approximation; see, e.g.,
\citet[Theorem~8.2]{trefethen2019approximation}.



\begin{theorem}
[Bernstein ellipse estimate]
\label{thm:bernstein}
Let $\rho>1$, and let $E_\rho$ denote the open Bernstein ellipse
with foci $\{-1,1\}$. Suppose $f$ is analytic on $[-1,1]$ and admits
an analytic continuation to $E_\rho$. Assume moreover that
$
    |f(z)| \le M
    \text{\ for all \ } z\in E_\rho .
$
Let $f_m$ be the degree-$m$ Chebyshev projection of $f$ on $[-1,1]$. Then
\[
    \|f-f_m\|_{\infty,[-1,1]}
    \le
    \frac{2M\rho^{-m}}{\rho-1}.
\]
\end{theorem}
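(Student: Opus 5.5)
The plan is the standard Joukowski-map argument: bound the Chebyshev coefficients of $f$ geometrically, then sum the tail of the Chebyshev series. Write the Chebyshev expansion $f(x)=\sum_{k\ge 0} a_k T_k(x)$ on $[-1,1]$, so that the degree-$m$ Chebyshev projection is $f_m=\sum_{k=0}^m a_k T_k$. Substituting $x=\cos\theta$ gives the coefficient formula
\[
a_k=\frac{2}{\pi}\int_0^\pi f(\cos\theta)\cos(k\theta)\,d\theta \quad (k\ge 1),
\]
with the usual factor $\tfrac12$ for $k=0$. Setting $z=e^{i\theta}$ and $F(z):=f\bigl(\tfrac12(z+z^{-1})\bigr)$ turns this into a contour integral over the unit circle:
\[
a_k=\frac{1}{\pi i}\oint_{|z|=1} z^{-k-1}F(z)\,dz .
\]
This uses the symmetry $F(z)=F(z^{-1})$ to fold the $z^{k}$ and $z^{-k}$ parts together.

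The key structural fact is that the Joukowski map $z\mapsto \tfrac12(z+z^{-1})$ sends the annulus $A_\rho=\{\rho^{-1}<|z|<\rho\}$ onto the open Bernstein ellipse $E_\rho$. It sends each circle $|z|=r$ (and $|z|=r^{-1}$) to the ellipse boundary $\partial E_r$, and the unit circle to the segment $[-1,1]$ traversed twice. Since $f$ is analytic on $E_\rho$, the function $F$ is analytic on $A_\rho$ and satisfies $|F|\le M$ there.

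Fix any $1<r<\rho$. By Cauchy's theorem the integrand $z^{-k-1}F(z)$ is analytic on the closed annulus $1\le|z|\le r$, so the contour can be moved from $|z|=1$ to $|z|=r$. Bounding the integrand on that circle gives
\[
|a_k|\le \frac{1}{\pi}\cdot 2\pi r\cdot r^{-k-1}M = 2M r^{-k}.
\]
Letting $r\uparrow\rho$ yields $|a_k|\le 2M\rho^{-k}$ for $k\ge1$.

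The geometric decay of the $a_k$ makes the Chebyshev series absolutely and uniformly convergent. Its sum equals $f$ on $[-1,1]$, since $f$ is analytic and hence Dini–Lipschitz there, so its Chebyshev series converges to it. Using $|T_k(x)|\le 1$ on $[-1,1]$ we then get
\[
\|f-f_m\|_{\infty,[-1,1]}\le \sum_{k>m}|a_k|\le 2M\sum_{k\ge m+1}\rho^{-k}=\frac{2M\rho^{-m-1}}{1-\rho^{-1}}=\frac{2M\rho^{-m}}{\rho-1}.
\]

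The main obstacle is purely technical: justifying the contour shift using only the bound $|f|\le M$ on the open ellipse, without assuming any continuity up to $\partial E_\rho$. Working on the circles $|z|=r$ with $r<\rho$ and then passing to the limit $r\to\rho$ avoids needing any information on the boundary itself. The only other point needing care is the $k=0$ normalization, which does not enter the tail sum.
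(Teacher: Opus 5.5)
Your proof is correct and follows essentially the same approach as the source the paper relies on: the paper gives no proof and simply cites Trefethen (Approximation Theory and Approximation Practice, Theorem 8.2). Trefethen's argument is the one you give. The Joukowski map plus a contour shift to $|z|=r<\rho$ yields $|a_k|\le 2M\rho^{-k}$, and summing the geometric tail using $|T_k|\le 1$ gives $2M\rho^{-m}/(\rho-1)$.
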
  

Consider the real function
$
    f(x) := \sqrt{1+{\kappa}/{2}+{\kappa x}/{2}}
$. 
This function is real analytic on \([-1,1]\). We extend
it to the complex plane by taking the principal branch of the square root:
$
    f(z) := \sqrt{1+{\kappa}/{2}+ {\kappa z}/{2}}.
$
This gives an analytic continuation of \(f\) to the slit domain
$
    \mathbb C \setminus \left(-\infty,-1-{2}/{\kappa}\right]
$.

Now the Bernstein ellipse $E_\rho$ with foci $\{-1,1\}$ has a semi-major axis of length $(\rho + \rho^{-1})/2$ so that its leftmost point is $-(\rho + \rho^{-1})/2$. Consequently, for $f$ to be analytically continuable over $E_\rho$ it suffices that $-(\rho + \rho^{-1})/2 > -1-{2}/{\kappa}$. This is equivalent to requiring $\rho < (\sqrt{1+\kappa} + 1)/(\sqrt{1+\kappa} - 1)$. Moreover, observe that $\max_{z \in E_\rho} |f(z)| = \sqrt{1 + \kappa/2 + \kappa (\rho + \rho^{-1})/4}$ where the maximum is attained at $z =  (\rho + \rho^{-1})/2$. Applying Theorem~\ref{thm:bernstein} and taking a limit of the bound it yields as $\rho \uparrow (\sqrt{1+\kappa} + 1)/(\sqrt{1+\kappa} - 1)$ the: 

\begin{corollary}
\label{cor:bernstein_sqrt}
\[
\min_{p \in \mathcal P_m}
\left(
\max_{x \in [1, \kappa]}
|
\sqrt{x} - p(x)
|
\right)
\leq
\sqrt{\kappa+2}\,
\frac{\left(\sqrt{1+\kappa}-1\right)^{m+1}}
{\left(\sqrt{1+\kappa}+1\right)^m}.
\] 
\end{corollary}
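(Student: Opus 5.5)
The plan is to obtain the corollary from Theorem~\ref{thm:bernstein} applied to the auxiliary function $f(u)=\sqrt{1+\kappa/2+\kappa u/2}$ introduced above, followed by an affine change of variables and a limit in $\rho$.

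\emph{Step 1: reduce to approximating $f$ on $[-1,1]$.} The affine map $u\mapsto y=1+\kappa/2+\kappa u/2$ sends $[-1,1]$ onto $[1,1+\kappa]$, and this interval contains $[1,\kappa]$. Its inverse is $u(y)=(2y-2-\kappa)/\kappa$. Let $f_m$ be the degree-$m$ Chebyshev projection of $f$. Then $p(y):=f_m(u(y))$ is a polynomial of degree at most $m$ in $y$, and
\[
\max_{y\in[1,\kappa]}|\sqrt{y}-p(y)|\le \max_{u\in[-1,1]}|f(u)-f_m(u)| .
\]
So the left-hand side of the corollary is bounded by $\|f-f_m\|_{\infty,[-1,1]}$.

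\emph{Step 2: apply Theorem~\ref{thm:bernstein} for each admissible $\rho$.} Write $s=\sqrt{1+\kappa}$ and $\rho_*=(s+1)/(s-1)$. Fix $1<\rho<\rho_*$. As argued above, $E_\rho$ lies in the slit domain, so $f$ is analytic on $E_\rho$.

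For the modulus bound, use $|f(z)|^2=|1+\kappa/2+\kappa z/2|\le 1+\kappa/2+\kappa|z|/2$. Every $z\in E_\rho$ satisfies $|z|\le(\rho+\rho^{-1})/2$. Hence $M(\rho)=\sqrt{1+\kappa/2+\kappa(\rho+\rho^{-1})/4}$ is a valid bound, and it is attained in the limit at the right endpoint of the ellipse. Theorem~\ref{thm:bernstein} then gives the bound $2M(\rho)\rho^{-m}/(\rho-1)$ for every $\rho<\rho_*$.

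\emph{Step 3: let $\rho\uparrow\rho_*$ and simplify.} The bound is continuous in $\rho$, and the left-hand side does not depend on $\rho$, so the inequality persists at $\rho=\rho_*$. The algebra is:
\begin{itemize}
\item $\rho_*-1=2/(s-1)$;
\item $\rho_*+\rho_*^{-1}=\bigl((s+1)^2+(s-1)^2\bigr)/(s^2-1)=2(s^2+1)/\kappa=2(\kappa+2)/\kappa$;
\item therefore $M(\rho_*)^2=1+\kappa/2+(\kappa+2)/2=\kappa+2$.
\end{itemize}
Substituting,
\[
\frac{2\sqrt{\kappa+2}\,\rho_*^{-m}}{\rho_*-1}=\sqrt{\kappa+2}\,\frac{(s-1)^{m+1}}{(s+1)^m},
\]
which is exactly the claimed bound.

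The only delicate point is the boundary behaviour. At $\rho_*$ the branch point $-1-2/\kappa$ lies on $\partial E_{\rho_*}$, so the theorem must not be applied at $\rho_*$ directly. Working with $\rho<\rho_*$ and passing to the limit avoids this. Everything else is routine algebra.
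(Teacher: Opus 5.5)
Your proposal is correct and follows the paper's argument. You use the same auxiliary $f$, which maps $[-1,1]$ onto $[1,1+\kappa]\supseteq[1,\kappa]$, apply the Bernstein ellipse estimate for $\rho<\rho_*$, and let $\rho\uparrow\rho_*$; your triangle-inequality bound on $|f|$ and the explicit algebra fill in details the paper leaves implicit.

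One small remark: $E_\rho$ is open and the branch point $-1-2/\kappa$ lies only on $\partial E_{\rho_*}$, so the theorem could be applied at $\rho_*$ directly. Your limiting argument is equally valid.
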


To conclude the proof of Theorem~\ref{thm:lanczos} it suffices to bound $\kappa$ and $\lambda_{\rm min}$. To this end, we observe that $\Sigma_{\theta^*}(x_t,t) = \tilde{\beta}_t(I + \gamma_t\Cov(x_0|x_t))$. Hence, $\kappa \leq 1 + \gamma_t \lambda_{\rm max}({\rm Cov}(x_0|x_t)) \leq 1 + \gamma_t \| x_0\|^2$. Under the assumption that $\gamma_t \leq C/T$ and $T > CD^2$ then, $\kappa \leq 2$. Similarly, $\lambda_{\rm min} \leq 2\tilde{\beta}_t$. These facts, together with the result of Theorem~\ref{thm:MMS18} and Corollary~\ref{cor:bernstein_sqrt} yields the result. 


%% file: sections/experiments.tex
\section{Experiments}\label{sec:experiments}

We complement the theoretical discussion so far with empirical evaluations of diffusion sampling under various posterior covariance formulations. In Section~\ref{subsec:toy-experiments} we present a 2D toy example that validates our $O(1/T^2)$ path KL rate under optimal covariance matching, and simultaneously, demonstrates an $\Omega(1/T)$ rate for both the isotropic and optimal diagonal covariance parameterizations. Section~\ref{subsec:image-ddpm} is focused on the Lanczos Gaussian sampler (LGS). We study image generation tasks on various datasets and demonstrate the relative merits of LGS vis-a-vis a SOTA alternative (OCM) and isotropic covariances.


\subsection{Toy Experiment: Exact Path KL Evaluations}
\label{subsec:toy-experiments}

In order to illustrate our theoretical analysis of the path KL under DDPM sampling we consider a simple 2D toy example orgiinally presented in \citet{ou2025ocm}. We consider a data disribution $q(x_0)$ that is a mixture of 40 Gaussians with means sampled uniformly on $[-40,40]\times [-40,40]\subseteq \R^2$ and with standard deviation $\sigma = \sqrt{40}$. This choice of data distribution, $q$ allows us to compute the reverse kernel's conditional mean, $\mu^*(x_t,t)$, and covariance, $\Sigma^*(x_t,t)$, in closed form which so that an essentially closed form expression of each summand in the path KL is available. 

We vary the number of denoising steps $T$ and compute the path KL between the exact reverse process and the learned reverse process under three different covariance formulations: (1) isotropic covariance with variance fixed to $\tilde{\beta}_t$, (2) optimal diagonal covariance and (3) the optimal full covariance. In the left panel of Figure~\ref{fig:pathkl_toy_celeba_subset} we plot the results on a log-log scale to visualize the rate of decay of the path KL as a function of $T$. In addition we plot reference curves $T^{-1}$ and $T^{-2}$.
We see from the left panel of Figure~\ref{fig:pathkl_toy_celeba_subset} that sampling with the optimal (full) reverse kernel covariance achieves the $O(1/T^2)$ rate predicted by our analysis, {\em while sampling with an isotropic or (optimal) diagonal covariance yields only a $\Omega(1/T)$ rate}. These results corroborate our Theorem~\ref{thm:mainTheorem} and the $\Omega(d/T)$ lower bound due to \citet{chen2022sampling}. Importantly, there is little analysis available on the use of diagonal approximations, but these results suggest that while they improve on the use of an isotropic covariance in an absolute sense, they do not offer an order improvement in $T$ (i.e the path KL still retains $\Omega(1/T)$ error). 

\subsection{Image modeling with diffusion models}
\label{subsec:image-ddpm}

We follow standard DDPM evaluation protocols \citep{ho2020denoising,bao2022analytic, ou2025ocm} and report sample quality using Fr\'echet inception distance (FID). Note that our choice of FID is driven by the fact that since LGS yields {\em samples} from the reverse posterior (as opposed to a posterior itself), any likelihood computation would be sample-based and thus infeasible. 

We consider the following datasets: \textsc{CIFAR-10} \citep{ho2020denoising,nichol2021improved}, \textsc{CelebA 64x64} \citep{liu2015deep}, and \textsc{ImageNet} at $64\times 64$.
For each dataset we employ a linear $\beta_t$ step size schedule and compare the following methods: isotropic reverse kernels with variance fixed to either $\beta_t$ or $\tilde{\beta}_t$ (``DDPM, $\beta$'' / ``DDPM, $\tilde{\beta}$''); OCM-DDPM \citep{ou2025ocm}, that learns the optimal diagonal covariance; and our approach, LGS-DDPM, (Section~\ref{sec:lanczos}) with Krylov ranks $m\in\{3,5\}$. 

For all models we use the pre-trained score networks from previous works (\citep{bao2022analytic, nichol2021improved,song2020denoising, ou2025ocm}). Importantly, we note that OCM-DDPM requires training an auxiliary model that predicts the posterior covariance diagonal whereas LGS-DDPM (and of course, DDPM, $\beta$ / DDPM, $\tilde{\beta}$) only require the score model. Finally, we consider experiments for $T=25,50$ and $100$ time steps having observed from \cite{ou2025ocm} that FID plateaus after $T\sim100$. 

Finally, we note that standard errors computed by bootstrapping from the 50k generated images in each experiment are negligible. Note however, that training new score models on bootstraps of the training data can introduce variability; here we train with the single checkpoints from \citep{bao2022analytic, nichol2021improved, song2020denoising, ou2025ocm}. 

\subsubsection{Inference-Time Speedup}
OCM-DDPM must make two forward passes, one through the score network and the second through the diagonal covariance prediction network in each time step. On the other hand, LGS-DDPM must make a single forward pass through the score network, and either $3$ or $5$ backwards passes (recall $m\in\{3,5\}$ here). We implement some simple ideas that yield enough speedup so that LGS-DDPM takes roughly the same amount of wall-clock time as OCM-DDPM to generate a single sample.

\textbf{Batching: } We reduce the per-step cost of LGS-DDPM by amortizing Lanczos computations across nearby time steps. At time step $t$, instead of drawing only one covariance-matched noise term, we compute $\Sigma^{1/2}(x_t,t) Z_1, \dots, \Sigma^{1/2}(x_t,t) Z_{l}$ in parallel for independent Gaussian draws $Z_1, \dots, Z_{l}$ and reuse these samples over the steps $t, t-1, \dots, t-l+1$. This introduces a mildly stale covariance for the later steps in the block, but the effect is negligible when the step size is small. In return, the additional backward passes required by the Lanczos sampler are incurred only once every $l$ steps.


In addition to the batching above, we find that the empirical value of covariance matching is largely realized in later time steps. As such LGSb-DDPM in Table~\ref{tab:fid_all_datasets} impelements batching with $l=2$ and applies the algorithms only to the last $25\%$ of time steps. As such, we find that LGSb-DDPM takes roughly equal wall-clock time as OCM DDPM per sample generation.

\textbf{FID at matched wall-clock budget: } To fairly compare sample quality at a fixed wall-clock budget rather than a fixed step count, we measure the average per-sample wall-clock time of LGSb-DDPM and LGS-DDPM (both with $m=3,l=2$) at $T=50$ and, for each dataset, estimate the $T^*$ at which OCM-DDPM incurs the same wall-clock cost. OCM-DDPM is thus allowed a larger number of time-steps, $T^*=67$ and $T^*=113$ respectively. Table~\ref{tab:fid_matched_effort} reports FID at this matched budget. 

\subsubsection{Conclusions}
We draw the following conclusions from our results in Tables~\ref{tab:fid_all_datasets}:

\textbf{LGS-DDPM requires small $m$: }We see from Table~\ref{tab:fid_all_datasets} that the gain in FID in moving from $m=3$ to $m=5$ is marginal. This comports with our theory in Section~\ref{sec:lanczos} where we showed that the error in approximating $\Sigma^{1/2} Z$ falls exponentially in $m$. This is crucial to the practicality of LGS-DDPM: its use requires a small number of additional backward passes through the score network, and as discussed these can be further amortized via batching. 

\textbf{LGS-DDPM vs. OCM-DDPM: }In all of our experimental setups a variant of LGS-DDPM yields the lowest FID score. Again, we emphasize that OCM DDPM requires training an additional model to predict the covariance diagonal. 

\textbf{Batching can improve FID: } LGS-DDPM with $l=2$ achieves the best FID in 5 out of 9 settings, including LGSb-DDPM. This is somewhat unexpected since batching introduces staler covariance estimates relative to $l=1$. We conjecture that the effect may arise from an implicit averaging across LGS samples, although a theoretical explanation remains open.

\textbf{FID at matched effort: } Recall that in Table~\ref{tab:fid_matched_effort} we allow OCM-DDPM additional time steps to match wall click time with LGS-DDPM and LGSb-DDPM precisely. We see there that LGSb-DDPM and LGS-DDPM both attain lower FID than OCM-DDPM on \textsc{CIFAR-10} and \textsc{CelebA}, while remaining competitive on \textsc{ImageNet} (note that \cite{ou2025ocm} (Table 9) report a standard error of $0.97$ at $T=50$ for \textsc{ImageNet} across variation in their learned covariance network but only provide a single checkpoint).

\begin{table}[t]
\centering
\caption{FID score $\downarrow$ across datasets using different sampling steps.}
\label{tab:fid_all_datasets}
{\small
\setlength{\tabcolsep}{4.8pt}%
\setlength{\aboverulesep}{0.4ex}%
\setlength{\belowrulesep}{0.4ex}%
\renewcommand{\arraystretch}{1.0}%
\begin{tabular*}{\linewidth}{@{\extracolsep{\fill}}lccccccccc@{}}
\toprule
& \multicolumn{3}{c}{CIFAR-10}
& \multicolumn{3}{c}{\textsc{CelebA}~64$\times$64}
& \multicolumn{3}{c}{\textsc{ImageNet}~64$\times$64} \\
\cmidrule(lr){2-4}
\cmidrule(lr){5-7}
\cmidrule(lr){8-10}
\# timesteps $T$
& 25 & 50 & 100
& 25 & 50 & 100
& 25 & 50 & 100 \\
\midrule
DDPM, $\beta$
& 126.22 & 67.55 & 31.50
& 116.16 & 55.39 & 29.23
& 170.13 & 83.86 & 44.86 \\
DDPM, $\tilde{\beta}$
& 21.84 & 15.06 & 10.84
& 24.73 & 18.76 & 14.26
& 29.22 & 21.88 & 19.17 \\
OCM-DDPM
& 9.39 & 5.83 & 4.31
& 12.66 & 9.16 & 7.09
& 27.82 & 20.85 & \underline{18.09} \\
\midrule
LGS-DDPM\phantom{b} $(m=3,l=1)$
& \textbf{5.06} & 5.08 & 4.35
& 9.58 & 7.01 & 5.21
& \underline{22.13} & \textbf{20.20} & 18.69 \\
LGS-DDPM\phantom{b} $(m=5,l=1)$
& \underline{5.09} & \underline{4.94} & \underline{4.25}
& \underline{8.84} & \textbf{6.29} & \underline{4.79}
& \textbf{21.93} & \underline{20.24} & 18.72 \\
LGS-DDPM\phantom{b} $(m=3,l=2)$
& 6.51 & \textbf{3.85} & \textbf{3.99}
& \textbf{8.68} & \underline{6.38} & \textbf{4.09}
& 29.61 & 22.28 & 18.57 \\
LGSb-DDPM $(m=3,l=2)$
& 13.40 & 5.40 & 4.88
& 16.89 & 6.69 & 5.21
& 39.72 & 21.19 & \textbf{18.08} \\
\bottomrule
\end{tabular*}
}
\end{table}

\begin{table}[t]
\centering
\caption{FID score $\downarrow$ under matched wall-clock time per sample generation settings across datasets.}
\label{tab:fid_matched_effort}
{\small
\setlength{\tabcolsep}{5.5pt}%
\renewcommand{\arraystretch}{1.0}%
\begin{tabular*}{\linewidth}{@{\extracolsep{\fill}}lccc@{}}
\toprule
Model & CIFAR-10 & \textsc{CelebA}~$64\times 64$ & \textsc{ImageNet}~$64\times 64$ \\
\midrule
OCM-DDPM, $T=67$ & 6.10 & 8.10 & \textbf{19.23} \\
LGSb-DDPM $(m=3,l=2)$, $T=50$ & \textbf{5.40} & \textbf{6.69} & 21.19 \\
\midrule
OCM-DDPM, $T=118$ & 5.45 & 6.67 & \textbf{17.60} \\
LGS-DDPM $(m=3,l=2)$, $T=50$ & \textbf{3.85} & \textbf{6.38} & 22.28 \\
\bottomrule
\end{tabular*}
}
\end{table}


%% file: sections/conclusion.tex
\section{Conclusion}

In this paper we study the value of covariance matching in Gaussian DDPMs from both theoretical and algorithmic perspectives. On the theoretical side, we show that matching the full posterior covariance breaks the standard $O(1/T)$ path-KL barrier: optimal covariance matching achieves an $O(1/T^2)$ path KL rate, see Theorem~\ref{thm:mainTheorem}. We further show that this improvement is consequential for classifier guidance, where using a scalar covariance can yield an $\Omega(1/T)$ conditional endpoint-KL error, while access to the optimal covariance restores the $O(1/T^2)$ rate.

On the algorithmic side, we introduce the Lanczos Gaussian sampler, a training-free method for sampling from the full posterior covariance using only covariance-vector products obtained from Jacobian-vector products of a pretrained score network. The method applies to any pretrained Gaussian DDPM without auxiliary covariance training or dense covariance storage, and we show that its approximation error decays exponentially in the number of Lanczos steps. Across standard image-generation benchmarks, LGS consistently improves sample quality over isotropic and diagonal-covariance baselines, demonstrating that full covariance matching is not only theoretically beneficial but also practically accessible for fast diffusion sampling.

%% file: sections/appendix.tex
\section{The Value of Covariance Matching in Classifier Guided DDPMs}\label{sec:classifier_guidance}

We end this Section by asking why one might care about the {\em path} KL as opposed to simply say the KL at the endpoint, $\KL\!\left(q(x_{1}) \,\middle\|\, p_\theta(x_{1})\right)$. Here we note that the path KL is relevant to algorithms that rely on faithful reconstruction of the entire path. One such example is classifier guided diffusion modeling~\citep{dhariwal2021diffusion}. Specifically, assume that each $x_0$ is associated with a label $y$ so that we are endowed with a distribution $q(x_0,y)$; our goal is to sample from $q(x_0|y)$. Under classifier guidance, the conditional analog to the reverse process~\eqref{eqn:reverseKernel} is 
\begin{equation*}
    p_\theta(x_{t-1}\mid x_t, y) = p_\theta(x_{t-1}\mid x_t) \frac{p_\theta(y\mid x_{t-1})}{p_\theta(y\mid x_t)}
\end{equation*} 
where the classifier $p_\theta(y\mid x_t)$ approximates the exact classifier $q(y\mid x_t)$. We have that under (exact) classifier guidance, the unconditional path KL upper bounds the expected KL of conditionally generated end point samples:  
\begin{proposition}
\label{prop:classifier_guidance}    
Under exact classifier guidance (i.e. $p_\theta(y\mid x_t) = q(y\mid x_t)$) and matching label priors
$p_\theta(y)=q(y)$, 
\[
\E_{q(y)}[
\KL\!\left(q(x_{1}\mid y) \,\middle\|\, 
p_\theta(x_{1} \mid y)\right)] 
\leq 
\KL\!\left(q(x_{1:T}) \,\middle\|\, p_\theta(x_{1:T})\right)
\]
\end{proposition}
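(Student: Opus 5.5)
Since the only input the statement depends on is the conditional kernel under classifier guidance, the plan is to lift everything to joint path measures that include the label. Define the augmented target $q(y, x_{1:T}) = q(y)\, q(x_{1:T}\mid y)$, the forward-process path law restricted to paths stopped at $t=1$. Define the augmented model $p_\theta(y, x_{1:T}) := q(y)\, p_\theta(x_{1:T}\mid y)$, where $p_\theta(x_{1:T}\mid y)$ is the classifier-guided reverse chain started from $p_\theta(x_T\mid y)$.

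\textbf{Step 1: identify the model's conditional.} First we show that this augmented model is exactly the Bayes conditional of $p_\theta(x_{1:T})\, q(y\mid x_1)$. Under exact guidance the factors telescope:
\[
\prod_{t=2}^T p_\theta(x_{t-1}\mid x_t)\,\frac{q(y\mid x_{t-1})}{q(y\mid x_t)} = \frac{q(y\mid x_1)}{q(y\mid x_T)}\prod_{t=2}^T p_\theta(x_{t-1}\mid x_t).
\]
If the guided initial law is $p_\theta(x_T\mid y) = p(x_T)\, q(y\mid x_T)/q(y)$ (the natural convention, consistent with the matching-prior assumption), then $p_\theta(x_{1:T}\mid y)\, q(y) = p_\theta(x_{1:T})\, q(y\mid x_1)$. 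Hence the joint model is $p_\theta(y,x_{1:T}) = p_\theta(x_{1:T})\, q(y\mid x_1)$, which is properly normalized because $\sum_y q(y\mid x_1) = 1$.

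\textbf{Step 2: identify the target's conditional.} The analogous identity holds for the target, since $q(y\mid x_{1:T}) = q(y\mid x_1)$. This is because $y$ depends on the path only through $x_0$, and $x_0 \to x_1 \to \cdots \to x_T$ is Markov, so $y \perp x_{2:T}\mid x_1$.

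\textbf{Step 3: compare.} With Steps 1 and 2, the two joints share the same kernel $q(y\mid x_1)$, and the chain rule gives
\[
\KL\big(q(y,x_{1:T})\,\|\,p_\theta(y,x_{1:T})\big) = \KL\big(q(x_{1:T})\,\|\,p_\theta(x_{1:T})\big).
\]
Decomposing the same KL the other way, over $y$ first, yields $\KL(q(y)\|q(y)) + \E_{q(y)}[\KL(q(x_{1:T}\mid y)\|p_\theta(x_{1:T}\mid y))]$, and the first term vanishes. Finally, data processing under the projection $x_{1:T}\mapsto x_1$ gives $\KL(q(x_1\mid y)\|p_\theta(x_1\mid y)) \le \KL(q(x_{1:T}\mid y)\|p_\theta(x_{1:T}\mid y))$ for each $y$, which completes the argument.

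The main obstacle is Step 1, specifically the treatment of the initial guided distribution at $x_T$ and the normalization of each guided kernel $p_\theta(x_{t-1}\mid x_t,y)$. The kernel is properly normalized only if $\int p_\theta(x_{t-1}\mid x_t)\, q(y\mid x_{t-1})\,dx_{t-1} = q(y\mid x_t)$, which fails for a mismatched $p_\theta$. To avoid relying on per-step normalization, I would define the conditional path law globally as $p_\theta(x_{1:T}\mid y) \propto p_\theta(x_{1:T})\, q(y\mid x_1)$. Matching priors $p_\theta(y) = q(y)$ then says precisely that this normalizer equals $q(y)$, so the joint above is consistent. If instead the stepwise kernels are used literally, I would state the result under the same interpretation. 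The telescoping identity shows the two definitions agree whenever both are normalized.
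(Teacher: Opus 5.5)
Your proposal is correct and is essentially the paper's argument. Your Steps 1--2 amount to the paper's path likelihood-ratio identity $q(x_{1:T}\mid y)/p_\theta(x_{1:T}\mid y) = q(x_{1:T})/p_\theta(x_{1:T})$; the paper obtains it by per-step Bayes cancellation at each $t$ and at $T$ rather than by telescoping. Your two-way chain-rule decomposition followed by data processing is likewise the paper's averaging over $q(y)$ followed by data processing.

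You also handle normalization explicitly, by defining the guided path law globally as proportional to $p_\theta(x_{1:T})\,q(y\mid x_1)$, which is a point the paper glosses over. In that form the chain rule further shows that the matching-prior assumption only serves to remove the nonnegative term $\KL(q(y)\,\|\,p_\theta(y))$.
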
 

In the next Section we show that this conditional end point KL behaves like $\Omega(1/T)$ under a popular practical algorithm for classifier guidance while the same algorithm equipped with an optimally learned covariance enjoys $O(1/T^2)$ conditional end point KL.

\textbf{Proof of Proposition \ref{prop:classifier_guidance}: }

\begin{proof}
By data processing,
\[
\E_{q(y)}\left[\KL(q(x_1\mid y)\|p_\theta(x_1\mid y))\right]
\le
\E_{q(y)}\left[\KL(q(x_{1:T}\mid y)\|p_\theta(x_{1:T}\mid y))\right].
\]
Thus it suffices to analyze the right-hand side. Because the forward process is label-independent,
\[
q(x_{t-1}\mid x_t,y)
=
q(x_{t-1}\mid x_t)
\frac{q(y\mid x_{t-1})}{q(y\mid x_t)}.
\]
Exact Bayesian classifier guidance gives
\[
p_\theta(x_{t-1}\mid x_t,y)
=
p_\theta(x_{t-1}\mid x_t)
\frac{q(y\mid x_{t-1})}{q(y\mid x_t)}.
\]
Hence
\[
\frac{q(x_{t-1}\mid x_t,y)}
     {p_\theta(x_{t-1}\mid x_t,y)}
=
\frac{q(x_{t-1}\mid x_t)}
     {p_\theta(x_{t-1}\mid x_t)}.
\]
The same Bayes-rule cancellation at $T$, using
$p_\theta(y\mid x_T)=q(y\mid x_T)$ and $p_\theta(y)=q(y)$, gives
\[
\frac{q(x_T\mid y)}{p_\theta(x_T\mid y)}
=
\frac{q(x_T)}{p_\theta(x_T)}.
\]
Applying the above identities to the factorization of the path distributions yields
\[
\frac{q(x_{1:T}\mid y)}{p_\theta(x_{1:T}\mid y)}
=
\frac{q(x_{1:T})}{p_\theta(x_{1:T})}.
\]
Therefore 
\[
\E_{q(y)}\left[\KL(q(x_{1:T}\mid y)\|p_\theta(x_{1:T}\mid y))\right]
= 
\KL(q(x_{1:T})\|p_\theta(x_{1:T}))
\]
thus concluding the proof.

\end{proof}

%% file: sections/appendix/appendix_counter_example.tex
\section{Counter Example}\label{app:counter-example}

This section gives a Gaussian example in which scalar-covariance classifier guidance has a conditional endpoint KL error in $\Omega(1)$, whereas the same guidance rule with the fixed unconditional
optimal covariance has conditional endpoint KL error in $O(1/T^2)$. 

Let $R\in\mathbb R^{2\times 2}$ be a non-trivial orthogonal rotation. 
Let
\[
Y_0=(Y_{0,1},Y_{0,2}),\qquad
Y_{0,1}\sim N(0,1),\qquad
Y_{0,2}\sim N(0,2),
\]
with independent coordinates, and set $X_0=RY_0$. Let the target conditional distribution be
\[
\mathsf C=Y_{0,1}+\varepsilon,\qquad \varepsilon\sim N(0,1),
\]
where $\varepsilon$ is independent of $Y_0$. We fix the total number of denoising steps $T$ and condition on $\mathsf C = c_T :=\sqrt{T}$. Then
\[
Y_{0,1}\mid \mathsf C=c_T \sim N\left(\frac{c_T}{2},\frac12\right).
\]

Define, for $t=1,\ldots,T$,
\[
\theta_t=T^{-4}+\frac{T-t}{T-1}(1-T^{-4}),
\qquad
\bar\alpha_t=\frac{\theta_t}{1+\theta_t},
\]
and set
\[
\beta_1=1-\bar\alpha_1,
\qquad
\beta_t=1-\frac{\bar\alpha_t}{\bar\alpha_{t-1}},\quad t=2,\ldots,T.
\]
Then the SNR increments satisfy the assumption of Theorem~\ref{thm:mainTheorem}.
\[
\mathrm{SNR}_t=\frac{\bar\alpha_t}{1-\bar\alpha_t}=\theta_t,
\qquad
\mathrm{SNR}_t-\mathrm{SNR}_{t+1}=\frac{1-T^{-4}}{T-1}\le \frac{2}{T}.
\]
The scalar-covariance DDPM uses
\[
\Sigma_t^{\rm sc}=\widetilde\beta_t I_2,
\]
whereas the fixed unconditional optimal covariance is
\[
\Sigma_t^\star
=R
\begin{pmatrix}
\beta_t & 0\\
0 & \displaystyle \beta_t\frac{1+\bar\alpha_{t-1}}{1+\bar\alpha_t}
\end{pmatrix}
R^\top .
\]
For both choices of covariance , the classifier-guided denoiser from \cite{dhariwal2021diffusion} uses the Gaussian conditional distributions
\[
X_{t-1}^{\widetilde\beta}\mid X_t^{\widetilde\beta}=x,\mathsf C=c_T
\sim
N\left(
\mu_t^0(x)+\frac{\widetilde\beta_t}{\sqrt{\alpha_t}}
\nabla_x\log q(\mathsf C=c_T\mid X_t=x),\ \widetilde\beta_t I_2
\right),
\quad t=T,\ldots,2.
\]
and
\[
X_{t-1}^\Sigma\mid X_t^\Sigma=x,\mathsf C=c_T
\sim
N\left(
\mu_t^0(x)+\frac{1}{\sqrt{\alpha_t}}\Sigma_t
\nabla_x\log q(\mathsf C=c_T\mid X_t=x),\ \Sigma_t
\right),
\quad t=T,\ldots,2,
\]
where $\mu_t^0(x)$ is the exact unconditional Gaussian reverse mean and both chains are initialized from $N(0,I_2)$ at time $T$.

\begin{theorem}[]
\label{thm:cg-counterexample-constant}
There are constants $c_0,c_1>0$, independent of $T$, such that for all sufficiently large $T$,
\[
\mathrm{KL}\Bigl(q(x_1\mid \mathsf C=c_T)\,\Big\|\,\mathcal L(X_1^{\widetilde\beta}\mid \mathsf C=c_T)\Bigr)
\ge \frac{c}{T},
\qquad
\mathrm{KL}\Bigl(q(x_1\mid \mathsf C=c_T)\,\Big\|\,\mathcal L(X_1^\Sigma\mid \mathsf C=c_T)\Bigr)
\le \frac{C}{T^2}.
\]
\end{theorem}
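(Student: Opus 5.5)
The plan is to exploit that everything in this example is jointly Gaussian, so both guided chains are linear-Gaussian Markov chains and all three endpoint laws are Gaussian. For each of them we can track the mean and covariance exactly.

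\textbf{Step 1: explicit kernels.} The pair $(X_t,\mathsf C)$ is jointly Gaussian, with $\Cov(X_t)=\bar\alpha_t R\,\mathrm{diag}(1,2)R^\top+(1-\bar\alpha_t)I$ and $\Cov(X_t,\mathsf C)=\sqrt{\bar\alpha_t}\,Re_1$. Hence $\nabla_x\log q(\mathsf C=c_T\mid X_t=x)=\sqrt{\bar\alpha_t}\,\sigma_t^{-2}\,\Cov(X_t)^{-1}Re_1\,(c_T-\ell_t^\top x)$ is affine in $x$, where $\ell_t=\sqrt{\bar\alpha_t}\,\Cov(X_t)^{-1}Re_1$ and $\sigma_t^2=\mathrm{Var}(\mathsf C\mid X_t)$. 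The unconditional mean $\mu_t^0(x)=A_t x$ is linear. Each guided step therefore has the form $X_{t-1}=M_tX_t+b_t c_T+\Sigma_t^{1/2}\xi_t$, where $M_t$ and $b_t$ depend on which covariance is used. Working in the rotated frame $Y=R^\top X$, every unconditional matrix becomes diagonal.

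\textbf{Step 2: exact reference chain.} The target $q(x_{t-1}\mid x_t,\mathsf C=c_T)$ is also of this form, with $M_t^{\rm ex},b_t^{\rm ex}$ and conditional covariance $\Sigma_t^{\rm ex}$, and it reproduces $q(x_1\mid\mathsf C=c_T)$ exactly. Using the Gaussian Bayes update we expand both guided kernels around the exact one.
\begin{itemize}
\item For the $\Sigma^\star$ chain (with $\Sigma_t^\star$ equal to the exact unconditional reverse covariance), the guided mean $\mu_t^0+\alpha_t^{-1/2}\Sigma_t^\star\nabla\log q$ agrees with the exact conditional mean up to $O(\beta_t^2)\,(1+|c_T|+\|x\|)$. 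The covariance error is a rank-one term of size $O(\beta_t^2)$.
\item For the scalar chain, $\widetilde\beta_t I-\Sigma_t^\star=R\,\mathrm{diag}(\widetilde\beta_t-\beta_t,\ \widetilde\beta_t-\beta_t\tfrac{1+\bar\alpha_{t-1}}{1+\bar\alpha_t})R^\top$ has size $\Theta(\beta_t/T)$ but is \emph{anisotropic}. Its effect on the guidance drift is therefore of order $\beta_t\,T^{-1}|c_T|$ along $Re_1$, and it does not vanish.
\end{itemize}

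\textbf{Step 3: propagate and compare.} Write $\delta_t=m_t-m_t^{\rm ex}$ and $\Delta_t=S_t-S_t^{\rm ex}$ for the mean and covariance errors. They satisfy the linear recursions $\delta_{t-1}=M_t^{\rm ex}\delta_t+e_t$ and $\Delta_{t-1}=M_t^{\rm ex}\Delta_t M_t^{{\rm ex}\top}+E_t$, where the local errors $e_t,E_t$ come from Step 2 and also involve the already-controlled $\delta_t$ and $m_t^{\rm ex}=O(c_T)$. The products $\prod M_s^{\rm ex}$ are uniformly bounded: in the diagonal frame they are explicit ratios of $\bar\alpha$'s and variances, because the schedule $\mathrm{SNR}_t=\theta_t$ is linear. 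We then sum the local errors.
\begin{itemize}
\item In the $\Sigma^\star$ chain, $\|e_t\|=O(T^{-2}\sqrt T)$, which naively sums to $O(T^{-1/2})$. We refine this by noting that the $O(\beta_t^2)$ discrepancy is exactly the second-order term of the conditional-mean expansion, so it enters at $O(T^{-2})$ per step once the $c_T$ factor is paired with the posterior variance $\tfrac12$. This should give $\|\delta_1\|=O(1/T)$ and $\|\Delta_1\|=O(1/T)$, and hence by Gaussian KL (quadratic in small perturbations) a bound $O(1/T^2)$.
\item For the scalar chain, the local errors $e_t\approx\kappa_t\,\beta_t T^{-1}c_T\,Re_1$ have a \emph{fixed sign}, since $\widetilde\beta_t<\beta_t$ in the direction that matters. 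They therefore accumulate to $\|\delta_1\|\ge c\,c_T/T=c\,T^{-1/2}$. We then lower-bound the KL by its mean part, $\tfrac12\delta_1^\top (S_1^{\rm ex})^{-1}\delta_1\ge c'/T$, after checking that $S_1^{\rm ex}$ has bounded spectrum.
\end{itemize}

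\textbf{Main obstacle.} The delicate part is Step 3 for the $\Sigma^\star$ chain. The large conditioning value $c_T=\sqrt T$ amplifies every mean error, so a crude triangle-inequality accumulation only yields $O(T^{-1/2})$, and with it KL of order $1/T$, not $1/T^2$. I would first compute the one-step discrepancy in closed form in the rotated coordinates, where everything is scalar per coordinate. I would then show that its $c_T$-proportional part has size $O(T^{-3})$ rather than $O(T^{-2})$, by matching the second-order terms exactly, in the spirit of the Taylor-matching argument behind Theorem~\ref{thm:mainTheorem}. If that cancellation fails, I would instead use the mutual-information bookkeeping of Proposition~\ref{prop:gaussian_channel_representation} on the conditional process. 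For the scalar chain, the only care needed is to certify that the sign of the per-step mean bias is constant in $t$.
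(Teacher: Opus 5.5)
Your overall plan matches the paper's: linear-Gaussian chains, the frame $Y=R^\top X$ in which only the first coordinate sees the classifier, and a lower bound from the fixed-sign bias $\frac{\beta_t-\widetilde\beta_t}{\sqrt{\alpha_t}}\,g_t(M_t)$. That bias is of order $T^{-3/2}$ on a macroscopic block and accumulates to at least $cT^{-1/2}$. The genuine gap is the upper bound for the $\Sigma$ chain.

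You posit a per-step mean discrepancy of order $\beta_t^2(1+|c_T|+\norm{x})$ and then rely on a cancellation you never establish (``this should give $\norm{\delta_1}=O(1/T)$''). By your own accounting, without it you only get KL of order $1/T$. The missing fact is that the discrepancy is identically zero:
\begin{itemize}
\item For any data law, apply Tweedie's formula to the one-step kernel $x_t=\sqrt{\alpha_t}x_{t-1}+\sqrt{\beta_t}\,\xi$, with and without conditioning on $\mathsf C$. This gives $\E[x_{t-1}\mid x_t,\mathsf C=c_T]=\mu_t^0(x_t)+\frac{\beta_t}{\sqrt{\alpha_t}}\nabla_{x_t}\log q(\mathsf C=c_T\mid x_t)$.
\item Here that gradient is parallel to $Re_1$, which is an eigenvector of $\Sigma_t^\star$ with eigenvalue exactly $\beta_t$.
\item Hence the $\Sigma$-guided mean map is exactly $M_{t-1}+a_t(y-M_t)$. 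The paper checks this directly from $1+u_t=2\ell_t$ and $\ell_t=\alpha_t\ell_{t-1}+\beta_t$.
\end{itemize}
So the endpoint mean error is only the propagated initialization error $M_T\prod_j a_j=O(\sqrt T\,\bar\alpha_T)=O(T^{-7/2})$. Everything then rests on the variance recursion $d_{t-1}=a_t^2d_t+\beta_t^2\bar\alpha_{t-1}/(2\ell_t)$.

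Your fallback cannot replace this step. The guided chain uses $\beta_t$ instead of the exact conditional reverse variance $\beta_t\ell_{t-1}/\ell_t$, and $1-\ell_{t-1}/\ell_t=\beta_t\bar\alpha_{t-1}/(2\ell_t)$ is of order $1/T$ on $\Theta(T)$ steps. The conditional path KL of the $\Sigma$ chain is therefore itself of order $1/T$, so any path-level bookkeeping is capped at $1/T$. The $1/T^2$ rate exists only at the endpoint, where the accumulated variance error $d_1=O(1/T)$ enters the Gaussian KL quadratically.

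Three further repairs are needed.
\begin{itemize}
\item \textbf{Step sizes are not uniformly $O(1/T)$.} Your magnitudes, e.g.\ $\norm{e_t}=O(T^{-2}\sqrt T)$ and an $O(\beta_t^2)$ local covariance error propagated by merely bounded products, assume $\beta_t=O(1/T)$ for all $t$. Under this schedule, $\beta_t=(\theta_{t-1}-\theta_t)/(\theta_{t-1}(1+\theta_t))\approx 1/(T-t+1)$ near $t=T$, so $\sum_t\beta_t^2=\Theta(1)$. To get $d_1=O(1/T)$ you need both the $\bar\alpha_{t-1}$ factor in the local error and the decay $\prod_{j=2}^{t-1}a_j^2=(\bar\alpha_{t-1}/\bar\alpha_1)(\ell_1/\ell_{t-1})^2\le C\bar\alpha_{t-1}$. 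Together these bound each term by $C(\beta_t\bar\alpha_{t-1})^2\le C/T^2$.
\item \textbf{Restrict the lower-bound sum to a block.} Use a block such as $\{1/4\le\theta_t\le 3/4\}$, where $\beta_t\asymp T^{-1}$, $\beta_t-\widetilde\beta_t\asymp T^{-2}$, and the propagation factors are bounded below. The remaining contributions are nonnegative and can be dropped.
\item \textbf{Use the right covariance in the KL.} In $\KL(q\,\|\,p)$ the mean term is weighted by the inverse of the \emph{model's} covariance. You therefore need an upper bound on the scalar chain's endpoint variance (the paper shows $r_1^{\widetilde\beta}\le 1$), not a bound on $S_1^{\mathrm{ex}}$.
\end{itemize}
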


\begin{proof}
    KL is invariant under the rotation $x=Ry$, so we work in the $y$-coordinates. The classifier depends
    only on $Y_{0,1}$, and the two coordinates remain independent in this basis. Write
    $Y_t^{\widetilde\beta}=R^\top X_t^{\widetilde\beta}$ and $Y_t^\Sigma=R^\top X_t^\Sigma$.
    
    For the first coordinate of the true forward process,
    \[
    Y_{t,1}\mid \mathsf C=c_T\sim N(M_t,\ell_t),
    \qquad
    M_t=\frac{c_T}{2}\sqrt{\bar\alpha_t},
    \qquad
    \ell_t=1-\frac{\bar\alpha_t}{2}.
    \]
    The exact conditional reverse distribution is
    \[
    Y_{t-1,1}\mid Y_{t,1}=y,\mathsf C=c_T
    \sim
    N\left(M_{t-1}+a_t(y-M_t),\ \beta_t\frac{\ell_{t-1}}{\ell_t}\right),
    \qquad
     a_t=\sqrt{\alpha_t}\frac{\ell_{t-1}}{\ell_t}.
    \]
    Moreover,
    \[
    \mathsf C\mid Y_{t,1}=y\sim N\bigl(\sqrt{\bar\alpha_t}\,y,\ 1+u_t\bigr),
    \]
    so the one-dimensional classifier score is
    \[
    g_t(y)=\frac{\partial}{\partial y}\log q(\mathsf C=c_T\mid Y_{t,1}=y)
    =\frac{\sqrt{\bar\alpha_t}}{1+u_t}\bigl(c_T-\sqrt{\bar\alpha_t}\,y\bigr).
    \]
    Thus the first-coordinate guided updates are
    \[
    Y_{t-1,1}^{\widetilde\beta}\mid Y_{t,1}^{\widetilde\beta}=y,\mathsf C=c_T
    \sim
    N\left(\sqrt{\alpha_t}\,y+\frac{\widetilde\beta_t}{\sqrt{\alpha_t}}g_t(y),\ \widetilde\beta_t\right)
    \]
    and
    \[
    Y_{t-1,1}^{\Sigma}\mid Y_{t,1}^{\Sigma}=y,\mathsf C=c_T
    \sim
    N\left(\sqrt{\alpha_t}\,y+\frac{\beta_t}{\sqrt{\alpha_t}}g_t(y),\ \beta_t\right).
    \]
    
    We will use two elementary identities from the schedule. Since
    $\theta_{t-1}-\theta_t=(1-T^{-4})/(T-1)$ and $u_t=(1+\theta_t)^{-1}$,
    \[
    \beta_t=\frac{\theta_{t-1}-\theta_t}{\theta_{t-1}(1+\theta_t)},
    \qquad
    \beta_t-\widetilde\beta_t=\beta_t(\theta_{t-1}-\theta_t)u_{t-1}.
    \tag{1}
    \]
    Consequently, on the block
    \[
    \mathcal I_T=\{t\in\{2,\ldots,T\}:1/4\le \theta_t\le 3/4\},
    \]
    we have $|\mathcal I_T|\ge cT$ and
    \[
    \beta_t\asymp T^{-1},
    \qquad
    \beta_t-\widetilde\beta_t\asymp T^{-2}.
    \tag{2}
    \]
    
    We first prove the full-covariance upper bound by directly examining the error in the mean and variance at the endpoint. Since $1+u_t=2\ell_t$, $c_T=2(c_T/2)$, and
    $\ell_t=\alpha_t\ell_{t-1}+\beta_t$, direct simplification gives
    \[
    \sqrt{\alpha_t}\,y+\frac{\beta_t}{\sqrt{\alpha_t}}g_t(y)
    =M_{t-1}+a_t(y-M_t).
    \tag{3}
    \]
    Thus $Y_t^\Sigma$ has the exact conditional reverse mean in the first coordinate. Since
    $\mathbb E[Y_{T,1}^\Sigma\mid \mathsf C=c_T]=0$,
    \[
    \mathbb E[Y_{1,1}^\Sigma\mid \mathsf C=c_T]-M_1
    =-M_T\prod_{j=2}^T a_j = -M_T\sqrt{\frac{\bar\alpha_T}{\bar\alpha_1}}\frac{\ell_1}{\ell_T}.
    \]
    Because $\bar\alpha_T\le T^{-4}$ and $c_T=\sqrt T$,
    \[
    \left|\mathbb E[Y_{1,1}^\Sigma\mid \mathsf C=c_T]-M_1\right|
    \le C\sqrt T\,\bar\alpha_T
    \le CT^{-7/2}.
    \tag{4}
    \]
    We now look at the error in the variance. Let $r_t^\Sigma=\mathrm{Var}(Y_{t,1}^\Sigma\mid \mathsf C=c_T)$ and
    $d_t^\Sigma=r_t^\Sigma-\ell_t$. From (3),
    \[
    d_{t-1}^\Sigma
    =a_t^2d_t^\Sigma+\beta_t\left(1-\frac{\ell_{t-1}}{\ell_t}\right)
    =a_t^2d_t^\Sigma+\frac{\beta_t^2\bar\alpha_{t-1}}{2\ell_t},
    \qquad d_T^\Sigma=\frac{\bar\alpha_T}{2}.
    \tag{5}
    \]
    Also
    \[
    \prod_{j=2}^{t-1}a_j^2
    =\frac{\bar\alpha_{t-1}}{\bar\alpha_1}\left(\frac{\ell_1}{\ell_{t-1}}\right)^2
    \le C\bar\alpha_{t-1}.
    \]
    Iterating (5) and using (1),
    \[
    0\le d_1^\Sigma
    \le C\bar\alpha_T^2+C\sum_{t=2}^T\beta_t^2\bar\alpha_{t-1}^2
    \le C\bar\alpha_T^2+C\sum_{t=2}^T
    \frac{(\theta_{t-1}-\theta_t)^2}{(1+\theta_t)^2(1+\theta_{t-1})^2}
    \le \frac{C}{T}.
    \tag{6}
    \]
    The Gaussian KL formula, together with (4), (6), and the fact that $\ell_1$ is bounded above and
    below by positive constants, gives
    \[
    \mathrm{KL}\Bigl(q(y_{1,1}\mid \mathsf C=c_T)\,\Big\|\,
    \mathcal L(Y_{1,1}^\Sigma\mid \mathsf C=c_T)\Bigr)
    \le \frac{C}{T^2}.
    \tag{7}
    \]
    The second coordinate is independent of $\mathsf C$. Under $\Sigma_t$, its reverse mean and variance
    are the exact unconditional Gaussian reverse mean and variance. The only error is the terminal
    initialization which gives a second-coordinate endpoint KL of $O(\bar\alpha_T^2) = O(T^{-4})$. Combining this with (7) and rotating back to the
    $x$-coordinates proves
    \[
    \mathrm{KL}\Bigl(q(x_1\mid \mathsf C=c_T)\,\Big\|\,
    \mathcal L(X_1^\Sigma\mid \mathsf C=c_T)\Bigr)
    \le \frac{C}{T^2}.
    \]
    
    We now prove the scalar-covariance lower bound. Define
    \[
    e_t=\mathbb E[Y_{t,1}^{\widetilde\beta}\mid \mathsf C=c_T]-M_t.
    \]
    Using (3), the scalar guided mean satisfies
    \[
    e_{t-1}=A_te_t-F_t,
    \tag{8}
    \]
    where
    \[
    A_t=\sqrt{\alpha_t}-\frac{\widetilde\beta_t\bar\alpha_t}{\sqrt{\alpha_t}(1+u_t)},
    \qquad
    F_t=\frac{\beta_t-\widetilde\beta_t}{\sqrt{\alpha_t}}g_t(M_t).
    \tag{9}
    \]
    Since $g_t(M_t)=\sqrt{\bar\alpha_t}\,c_T/2$, (2) implies that
    \[
    F_t\ge \frac{c}{T^{3/2}},\qquad t\in\mathcal I_T.
    \tag{10}
    \]
    Furthermore,
    \[
    A_t=\sqrt{\alpha_t}\left(1-\frac{(\theta_{t-1}-\theta_t)u_{t-1}^2}{1+u_t}\right),
    \tag{11}
    \]
    so $A_t\ge0$ and, for every $t\in\mathcal I_T$,
    \[
    \prod_{j=2}^{t-1}A_j = \sqrt{\bar\alpha_{t-1}/\bar\alpha_1} \prod_{j=2}^{t-1}\left(1-\frac{(\theta_{j-1}-\theta_j)u_{j-1}^2}{1+u_j}\right)\ge c.
    \tag{12}
    \]
    
    Since $e_T=-M_T\le0$, unrolling (8) and using (10)--(12) yields
    \[
    -e_1
    \ge \sum_{t\in\mathcal I_T}F_t\prod_{j=2}^{t-1}A_j
    \ge \frac{c}{\sqrt T}.
    \tag{13}
    \]
    Let $r_t^{\widetilde\beta}=\mathrm{Var}(Y_{t,1}^{\widetilde\beta}\mid \mathsf C=c_T)$. Since
    $0\le A_t\le\sqrt{\alpha_t}$ and $\widetilde\beta_t\le\beta_t$,
    \[
    r_{t-1}^{\widetilde\beta}=A_t^2r_t^{\widetilde\beta}+\widetilde\beta_t
    \le \alpha_t r_t^{\widetilde\beta}+\beta_t.
    \]
    Starting from $r_T^{\widetilde\beta}=1$, induction gives $r_t^{\widetilde\beta}\le1$ for all $t$.
    The first-coordinate Gaussian KL and (13) therefore give
    \[
    \mathrm{KL}\Bigl(q(y_{1,1}\mid \mathsf C=c_T)\,\Big\|\,
    \mathcal L(Y_{1,1}^{\widetilde\beta}\mid \mathsf C=c_T)\Bigr)
    \ge
    \frac{e_1^2}{2r_1^{\widetilde\beta}}
    \ge \frac{c}{T}.
    \]
   Hence by rotating back to the $x$-coordinates and marginalizing over the second coordinate, we obtain
   \[
   \mathrm{KL}\Bigl(q(x_1\mid \mathsf C=c_T)\,\Big\|\,
   \mathcal L(X_1^{\widetilde\beta}\mid \mathsf C=c_T)\Bigr)
   \ge \frac{c}{T}.
   \],
   which completes the proof.
    \end{proof}

%% file: sections/appendix/appendix_lanczos.tex
\section{The Lanczos Gaussian Sampler}\label{app:lanczos}

In this section we provide a detailed description of the Lanczos Gaussian sampler along with complete proofs from Section~\ref{sec:lanczos}. Algorithm~\ref{alg:lanczos-sqrt-full} gives a complete description of the Lanczos approximation to $A^{1/2}v$. We note that the Lanczos iteration can be written in multiple ways. Our choice of implementation, Algorithm~\ref{alg:lanczos-sqrt-full}, is the one analyzed in \citet{musco2018stability} which has been shown to be the most numerically stable variant of the Lanczos iteration \citep{cullum2002lanczos, saad2011numerical}. 

We observe that the final step of Algorithm~\ref{alg:lanczos-sqrt-full} requires computing $T_m^{1/2}e_1$. Since $T_m$ is a small $m\times m$ symmetric tridiagonal matrix, $T_m^{1/2}e_1$ can be computed efficiently by diagonalizing $T_m$.

\begin{algorithm}[h]
\caption{Lanczos approximation to $A^{1/2}v$}
\label{alg:lanczos-sqrt-full}
\begin{algorithmic}[1]
\Require PSD matrix-vector product oracle $Ax$, vector $v\in\mathbb R^d$, number of Lanczos steps $m$
\Ensure Approximation $y_m \approx A^{1/2}v$
\If{$\|v\|_2 = 0$}
    \State \Return $0$
\EndIf
\State $q_0 \gets 0$, $\beta_1 \gets 0$, $q_1 \gets v/\|v\|_2$
\For{$j=1,\ldots,m$}
    \State $w \gets Aq_j - \beta_j q_{j-1}$
    \State $\alpha_j \gets q_j^\top w$
    \State $w \gets w - \alpha_j q_j$
    \Comment{Optional: reorthogonalize $w$ against $q_1,\ldots,q_j$ in finite precision}
    \State $\beta_{j+1} \gets \|w\|_2$
    \If{$\beta_{j+1}=0$}
        \State $m \gets j$
        \State \textbf{break}
    \EndIf
    \State $q_{j+1} \gets w/\beta_{j+1}$
\EndFor
\State $Q_m \gets [q_1,\ldots,q_m]$
\State Form the symmetric tridiagonal matrix
\[
T_m \gets 
\begin{bmatrix}
\alpha_1 & \beta_2 \\
\beta_2 & \alpha_2 & \ddots \\
& \ddots & \ddots & \beta_m \\
& & \beta_m & \alpha_m
\end{bmatrix}.
\]
\State \Return $y_m \gets \|v\|_2 Q_m T_m^{1/2} e_1$
\end{algorithmic}
\end{algorithm}

%% file: sections/appendix/appendix_main_result.tex
\section{Omitted Proofs from Section~\ref{sec:approximation_error}}
\label{app:main-proofs}

In this section of the appendix we provide complete proofs that are omitted from Section~\ref{sec:approximation_error}. 

\subsection{Proof of Proposition~\ref{prop:unconditional-path-kl}}

\begin{proposition}[Restatement of Proposition~\ref{prop:unconditional-path-kl}] The unconditional path KL satisfies 
\[
\KL\!\left(q(x_{1:T}) \,\middle\|\, p_\theta(x_{1:T})\right)
=
\mathcal{L}(\theta) 
- \mathcal{L}^*
+ \KL\!\left(q(x_T) \,\middle\|\, \N(0,I) \right)
\]
\end{proposition}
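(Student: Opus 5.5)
We prove the identity by factorizing both path measures, applying the chain rule for KL, and recognizing each one-step term as a difference of the $t$-th summands $\mathcal{L}_t(\theta)$ and $\mathcal{L}_t^*$.

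\textbf{Factorization and chain rule.} The forward process is Markov, so its time reversal is also Markov. Hence
\[
q(x_{1:T}) = q(x_T)\prod_{t=2}^T q(x_{t-1}\mid x_t),
\qquad
p_\theta(x_{1:T}) = \N(x_T;0,I)\prod_{t=2}^T p_\theta(x_{t-1}\mid x_t).
\]
The chain rule for KL divergence then gives
\[
\KL\!\left(q(x_{1:T}) \,\middle\|\, p_\theta(x_{1:T})\right)
= \KL\!\left(q(x_T)\,\middle\|\,\N(0,I)\right) + \sum_{t=2}^T \E_{q(x_t)}\!\left[\KL\!\left(q(x_{t-1}\mid x_t)\,\middle\|\,p_\theta(x_{t-1}\mid x_t)\right)\right].
\]
It therefore suffices to show, for each $t$,
\[
\E_{q(x_t)}\!\left[\KL\!\left(q(x_{t-1}\mid x_t)\,\middle\|\,p_\theta(x_{t-1}\mid x_t)\right)\right] = \mathcal{L}_t(\theta)-\mathcal{L}_t^*.
\]

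\textbf{Per-step identity.} Expand the logarithms in the two summands:
\[
\mathcal{L}_t(\theta)-\mathcal{L}_t^* = \E_{q(x_t,x_0)}\E_{q(x_{t-1}\mid x_t,x_0)}\!\left[\log q(x_{t-1}\mid x_t,x_0) - \log p_\theta(x_{t-1}\mid x_t) - \log q(x_{t-1}\mid x_t,x_0) + \log q(x_{t-1}\mid x_t)\right].
\]
The conditional-posterior terms cancel, leaving
\[
\E_{q(x_0,x_{t-1},x_t)}\!\left[\log \frac{q(x_{t-1}\mid x_t)}{p_\theta(x_{t-1}\mid x_t)}\right].
\]
The integrand does not depend on $x_0$. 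Integrating out $x_0$ gives the marginal $q(x_{t-1},x_t)$, and the expression becomes exactly $\E_{q(x_t)}\!\left[\KL\!\left(q(x_{t-1}\mid x_t)\,\middle\|\,p_\theta(x_{t-1}\mid x_t)\right)\right]$. Summing over $t=2,\dots,T$ and combining with the chain rule completes the proof.

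\textbf{Main obstacle.} The algebra is routine, so the only delicate point is justifying the splitting of expectations. This requires each of the individual terms to be finite. The Gaussian kernels have finite log-densities and second moments, and $q(x_0)$ has bounded support, so the relevant moments are finite; under these conditions the manipulations are valid. If $\mathcal{L}(\theta)=\infty$, both sides of the identity are infinite.
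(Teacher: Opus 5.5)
Your proof is correct and follows essentially the same route as the paper. Both factorize the path measures as reverse Markov chains and split the path KL into the terminal term plus per-step expected log-ratios, then identify each per-step term with $\mathcal{L}_t(\theta)-\mathcal{L}_t^*$ by cancelling the $\log q(x_{t-1}\mid x_t,x_0)$ terms and integrating out $x_0$. Your remark on finiteness, together with the observation that $\mathcal{L}_t^*$ stays finite when $\mathcal{L}(\theta)$ is infinite, is a point of care the paper leaves implicit.
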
 

\begin{proof}
By construction of the learned DDPM reverse kernel \citep{ho2020denoising}, 
\[
p_\theta(x_{1:T}) = p(x_T) \prod_{t=2}^T p_\theta(x_{t-1}\mid x_t) = \N(0,I) \prod_{t=2}^T p_\theta(x_{t-1}\mid x_t)
\]
As for $q(x_{1:T})$, the chain rule together with the Markov property of the forward process yields that the time-reversed process is Markov. Hence  
\[
q(x_{1:T}) = q(x_T) \prod_{t=2}^T q(x_{t-1}\mid x_t)
\]
We then observe that
\begin{align*}
  \KL\!\left(q(x_{1:T}) \,\middle\|\, p_\theta(x_{1:T})\right) 
  & = 
  \E_{q(x_{1:T})}\!\left[
    \sum_{t=2}^T \log\frac{q(x_{t-1}\mid x_t)}{p_\theta(x_{t-1}\mid x_t)} + 
    \log\frac{q(x_T)}{\N(0,I)} 
  \right] \\
  & =
  \sum_{t=2}^T \E_{q(x_t,x_{t-1})}\!\left[\log\frac{q(x_{t-1}\mid x_t)}{p_{\theta}(x_{t-1}\mid x_t)}\right]
  + \E_{q(x_T)}\!\left[\log\frac{q(x_T)}{\N(0,I)}\right] \\
  & = 
  \sum_{t=2}^T \E_{q(x_t,x_{t-1})}\!\left[\log\frac{q(x_{t-1}\mid x_t)}{p_{\theta}(x_{t-1}\mid x_t)}\right] 
  + \KL\!\left(q(x_T) \,\middle\|\, \N(0,I) \right)
\end{align*}

Where the second equality follows by the fact that each summand depends only on $x_t$ and $x_{t-1}$ (and only $x_T$ for the terminal term). For an arbitrary $t \in \{2,\dots,T\}$, we have that 

\begin{align*}
\mathcal{L}_t(\theta) - \mathcal{L}_t^*
& =
\E_{q(x_t,x_0)}\!\left[\KL\!\big(q(x_{t-1}\mid x_t,x_0)\,\big\|\,p_{\theta}(x_{t-1}\mid x_t)\big)\right] \\
& \quad
- \E_{q(x_t,x_0)}\!\left[\KL\!\big(q(x_{t-1}\mid x_t,x_0)\,\big\|\,q(x_{t-1}\mid x_t)\big)\right] \\
& = 
\E_{q(x_t,x_0)}\left[\E_{q(x_{t-1}\mid x_t,x_0)}\!\left[\log\frac{q(x_{t-1}\mid x_t)}{p_{\theta}(x_{t-1}\mid x_t)}\right]\right] \\
& = 
\E_{q(x_t,x_{t-1},x_0)}\!\left[\log\frac{q(x_{t-1}\mid x_t)}{p_{\theta}(x_{t-1}\mid x_t)}\right]  \\
& = 
\E_{q(x_t,x_{t-1})}\!\left[\log\frac{q(x_{t-1}\mid x_t)}{p_{\theta}(x_{t-1}\mid x_t)}\right]
\end{align*}

where the last equality follows by integrating out $x_0$. Substituting the above identity into the expression for the path KL, we obtain 
\begin{align*}
\KL\!\left(q(x_{1:T}) \,\middle\|\, p_\theta(x_{1:T})\right)
& = \sum_{t=2}^T \left(\mathcal{L}_t(\theta) - \mathcal{L}_t^*\right) + \KL\!\left(q(x_T) \,\middle\|\, \N(0,I) \right) \\
& = \mathcal{L}(\theta) - \mathcal{L}^* + \KL\!\left(q(x_T) \,\middle\|\, \N(0,I) \right),
\end{align*}        
the desired result as claimed.
\end{proof}

\subsection{Proof of Lemma~\ref{le:gaussian_taylor}}
\label{app:proof-gaussian-taylor}

\begin{lemma}[Restatement of Lemma~\ref{le:gaussian_taylor}]\label{lemma:gaussian-taylor-restated}
Assume $\norm{x_0}\le D$ almost surely. Then
\[
  \mathcal{L}_t(\theta^*)
  \;\le\;
  \frac{\gamma_t}{2}\,\E_{q(x_t)}\!\left[\tr\Cov(x_0\mid x_t)\right]
  - \frac{\gamma_t^2}{4}\,\E_{q(x_t)}\!\left[\tr\Cov(x_0\mid x_t)^2\right]
  + \frac{\gamma_t^3 D^6}{6}.
\]
\end{lemma}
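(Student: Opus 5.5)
The plan has two parts. First, reduce $\mathcal{L}_t(\theta^*)$ to the closed form quoted in Section~\ref{sec:approximation_error},
\[
\mathcal{L}_t(\theta^*)=\tfrac12\E_{q(x_t)}\big[\log\det(I+\gamma_t\Cov(x_0\mid x_t))\big].
\]
Then expand $\log\det$ to second order with an explicit third-order remainder.

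\textbf{Step 1: the Gaussian reduction.} Fix $x_t$ and write $q(x_{t-1}\mid x_t,x_0)=\N(a_tx_t+b_tx_0,\tilde\beta_tI)$, with $b_t=\sqrt{\bar\alpha_{t-1}}\beta_t/(1-\bar\alpha_t)$ as in the proof of Proposition~\ref{prop:gaussian_channel_representation}.

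Minimizing $\E_{q(x_0\mid x_t)}\KL(\N(a_tx_t+b_tx_0,\tilde\beta_tI)\,\|\,\N(\mu,\Sigma))$ over $(\mu,\Sigma)$ is moment matching. The optimizer is
\[
\mu^*=a_tx_t+b_t\E[x_0\mid x_t],\qquad \Sigma^*=\tilde\beta_tI+b_t^2\Cov(x_0\mid x_t).
\]
Substituting into the Gaussian KL formula, the trace term equals $\tr(\Sigma^{*-1}\Sigma^*)=d$, which cancels the $-d$. What remains is $\tfrac12\log\det(\Sigma^*/\tilde\beta_t)=\tfrac12\log\det(I+(b_t^2/\tilde\beta_t)\Cov(x_0\mid x_t))$.

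It then remains to check the identity $b_t^2/\tilde\beta_t=\gamma_t$. This is the same algebra as the change of variables $(U-a_tx_t)/\tilde\beta_t^{1/2}=\sqrt{\gamma_t}X_{0|t}+Z$ already used in Proposition~\ref{prop:gaussian_channel_representation}, so I would cite it rather than redo it. The statement indexes SNR differences slightly oddly, so I would also make sure the $\gamma_t$ convention matches the one used in that proof.

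\textbf{Step 2: the scalar expansion.} Let $\lambda_i\ge0$ be the eigenvalues of $\Cov(x_0\mid x_t)$. Then
\[
\log\det(I+\gamma_t\Cov)=\sum_i\log(1+\gamma_t\lambda_i).
\]
For $x\ge0$, Taylor's theorem with Lagrange remainder gives $\log(1+x)=x-x^2/2+x^3/(3(1+\xi)^3)\le x-x^2/2+x^3/3$. Summing over $i$ and multiplying by $\tfrac12$ yields
\[
\tfrac{\gamma_t}{2}\tr\Cov-\tfrac{\gamma_t^2}{4}\tr(\Cov^2)+\tfrac{\gamma_t^3}{6}\tr(\Cov^3).
\]

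\textbf{Step 3: bounding the cubic term.} Since $\Cov\succeq0$, we have $\tr(\Cov^3)\le(\tr\Cov)^3$. Also
\[
\tr\Cov(x_0\mid x_t)=\E[\|x_0-\E[x_0\mid x_t]\|^2\mid x_t]\le\E[\|x_0\|^2\mid x_t]\le D^2,
\]
so $\tr(\Cov^3)\le D^6$. Taking the expectation over $q(x_t)$ then gives the lemma.

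The only delicate point is Step 1: checking that the optimal $\Sigma^*$ is exactly moment matching, with no constraint from Assumption 1 interfering, and pinning down the constant $b_t^2/\tilde\beta_t=\gamma_t$. The expansion and the bounds in Steps 2 and 3 are routine.
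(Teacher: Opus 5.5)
Your proposal is correct and follows essentially the same route as the paper. Both arguments reduce $\mathcal{L}_t(\theta^*)$ by pointwise moment matching to $\tfrac12\E_{q(x_t)}[\log\det(I+\gamma_t\Cov(x_0\mid x_t))]$, then apply the scalar bound $\log(1+u)\le u-u^2/2+u^3/3$, then use $\tr(\Cov^3)\le(\tr\Cov)^3\le D^6$.

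The differences are minor. The paper proves the scalar bound by differentiating $h(u)=u-u^2/2+u^3/3-\log(1+u)$ rather than through the Lagrange remainder. It derives the moment-matching optimizer explicitly: a bias--variance split handles $\mu_p$, and minimizing $\tr A-\log\det A$ handles $\Sigma_p$. It also computes $b_t^2/\tilde\beta_t=\bar\alpha_{t-1}\beta_t/[(1-\bar\alpha_t)(1-\bar\alpha_{t-1})]$ directly instead of citing Proposition~\ref{prop:gaussian_channel_representation}.

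Your suspicion about the indexing is justified. This quantity equals $\mathrm{SNR}_{t-1}-\mathrm{SNR}_t$, so the paper's definition of $\gamma_t$ is off by one. This does not affect the argument once $\gamma_t$ is read as $b_t^2/\tilde\beta_t$.
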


\begin{proof}
Throughout, write $\Sigma_t := \Cov(x_0\mid x_t)$. The argument has two parts: (i) a pointwise computation showing $\mathcal{L}_t(\theta^*) = \tfrac{1}{2}\,\E_{q(x_t)}[\log\det(I+\gamma_t \Sigma_t)]$, and (ii) the scalar truncation $\log(1+u)\le u - u^2/2 + u^3/3$ for $u\ge 0$.

\paragraph{Step 1: pointwise minimization.}
For a fixed $x_t$, define
\[
  J(x_t)
  \;:=\;
  \min_{\mu_p\in\R^d,\,\Sigma_p\succ 0}\,
  \E_{q(x_0\mid x_t)}\!\left[\KL\!\big(q(x_{t-1}\mid x_t,x_0)\,\big\|\,\N(\mu_p,\Sigma_p)\big)\right],
\]
so that $\mathcal{L}_t(\theta^*) = \E_{q(x_t)}[J(x_t)]$. It suffices to evaluate $J(x_t)$ for each $x_t$. Write $q := q(x_{t-1}\mid x_t,x_0)$ with mean $\mu_q$ and covariance $\Sigma_q$, and $p := \N(\mu_p,\Sigma_p)$. The Gaussian-vs-Gaussian KL formula gives
\[
  \KL(q\,\|\,p)
  =
  \tfrac{1}{2}\!\left(
    \tr(\Sigma_p^{-1}\Sigma_q)
    + (\mu_p-\mu_q)^\top \Sigma_p^{-1}(\mu_p-\mu_q)
    - d
    + \log\tfrac{\det\Sigma_p}{\det\Sigma_q}
  \right).
\]
Let $m := \E_{q(x_0\mid x_t)}[\mu_q]$. Adding and subtracting $m$ inside each factor and using that $\mu_p$ and $m$ are constant under $\E_{q(x_0\mid x_t)}$,
\begin{equation}\label{eq:mu-bias-variance}
  \E_{q(x_0\mid x_t)}\!\left[(\mu_p-\mu_q)(\mu_p-\mu_q)^\top\right]
  = (\mu_p-m)(\mu_p-m)^\top + \Cov_{q(x_0\mid x_t)}(\mu_q).
\end{equation}
Substituting \eqref{eq:mu-bias-variance} into the quadratic-form term,
\[
  \E_{q(x_0\mid x_t)}\!\left[(\mu_p-\mu_q)^\top \Sigma_p^{-1}(\mu_p-\mu_q)\right]
  = (\mu_p-m)^\top \Sigma_p^{-1}(\mu_p-m)
  + \tr\!\big(\Sigma_p^{-1}\Cov_{q(x_0\mid x_t)}(\mu_q)\big).
\]
Only the first summand depends on $\mu_p$, and (since $\Sigma_p^{-1}\succ 0$) it is a strictly convex quadratic minimized at $\mu_p^\star = m = \E_{q(x_0\mid x_t)}[\mu_q]$, attaining value $0$.

At $\mu_p = \mu_p^\star$, the quadratic-form term vanishes and the two $\Sigma_p$-coupled trace terms combine into $\tr(\Sigma_p^{-1}C)$, where
\[
  C \;:=\; \E_{q(x_0\mid x_t)}[\Sigma_q] + \Cov_{q(x_0\mid x_t)}(\mu_q) \;\succ\; 0.
\]
Hence
\[
  J(x_t)
  \;=\;
  \min_{\Sigma_p \succ 0}\,
  \tfrac{1}{2}\!\left(
    \tr(\Sigma_p^{-1}C) - d + \log\det\Sigma_p - \E_{q(x_0\mid x_t)}[\log\det\Sigma_q]
  \right).
\]
Set $A := \Sigma_p^{-1/2} C \Sigma_p^{-1/2}\succ 0$, so that $\tr(\Sigma_p^{-1}C) + \log\det\Sigma_p = \tr(A) - \log\det A + \log\det C$. Since $\log\det C$ is constant in $\Sigma_p$, minimizing reduces to minimizing $\tr A - \log\det A = \sum_{i=1}^d (\nu_i - \log\nu_i)$ over the eigenvalues $\nu_1,\dots,\nu_d > 0$ of $A$. Each scalar map $\nu \mapsto \nu - \log\nu$ is strictly convex with unique minimum at $\nu = 1$, so $A^\star = I$, i.e.\ $\Sigma_p^\star = C$, and at this optimum $\tr((\Sigma_p^\star)^{-1}C) = d$, giving
\begin{equation}\label{eq:Lt-after-Sigma}
  J(x_t) \;=\; \tfrac{1}{2}\!\left(\log\det C - \E_{q(x_0\mid x_t)}[\log\det\Sigma_q]\right).
\end{equation}

It remains to evaluate the two terms in \eqref{eq:Lt-after-Sigma}. By the standard forward-posterior identity,
\[
  q(x_{t-1}\mid x_t,x_0) \;=\; \N\!\big(\tilde\mu_t(x_t,x_0),\, \tilde\beta_t I\big),
  \qquad
  \tilde\beta_t \;=\; \frac{(1-\bar\alpha_{t-1})\beta_t}{1-\bar\alpha_t},
\]
with $\tilde\mu_t$ affine in $x_0$ with coefficient $\sqrt{\bar\alpha_{t-1}}\beta_t/(1-\bar\alpha_t)$. Since $\Sigma_q = \tilde\beta_t I$ does not depend on $x_0$, $\E_{q(x_0\mid x_t)}[\log\det\Sigma_q] = d\log\tilde\beta_t$, and
\[
  C \;=\; \tilde\beta_t I + \frac{\bar\alpha_{t-1}\beta_t^2}{(1-\bar\alpha_t)^2}\,\Sigma_t.
\]
Substituting and using $\tilde\beta_t^{-1}\cdot\bar\alpha_{t-1}\beta_t^2/(1-\bar\alpha_t)^2 = \bar\alpha_{t-1}\beta_t/[(1-\bar\alpha_t)(1-\bar\alpha_{t-1})] = \gamma_t$,
\[
  J(x_t)
  \;=\; \tfrac{1}{2}\!\left(\log\det C - d\log\tilde\beta_t\right)
  \;=\; \tfrac{1}{2}\log\det\!\big(\tilde\beta_t^{-1}C\big)
  \;=\; \tfrac{1}{2}\log\det(I + \gamma_t\,\Sigma_t).
\]
Taking expectation over $x_t$,
\begin{equation}\label{eq:Lt-closed-form}
  \mathcal{L}_t(\theta^*) \;=\; \E_{q(x_t)}[J(x_t)] \;=\; \tfrac{1}{2}\,\E_{q(x_t)}\!\left[\log\det(I + \gamma_t\,\Sigma_t)\right].
\end{equation}

\paragraph{Step 2: scalar truncation and the cubic bound.}
The scalar inequality
\[
  \log(1+u) \;\le\; u - \tfrac{u^2}{2} + \tfrac{u^3}{3},
  \qquad u \ge 0,
\]
holds because $h(u) := u - u^2/2 + u^3/3 - \log(1+u)$ satisfies $h(0) = 0$ and $h'(u) = 1 - u + u^2 - 1/(1+u) = u^3/(1+u) \ge 0$ for $u \ge 0$.

Let $\lambda_1,\dots,\lambda_d \ge 0$ be the eigenvalues of $\Sigma_t \succeq 0$. Applying the inequality with $u = \gamma_t \lambda_j$ and summing,
\[
  \log\det(I + \gamma_t \Sigma_t)
  \;=\;
  \sum_{j=1}^d \log(1 + \gamma_t \lambda_j)
  \;\le\;
  \gamma_t\,\tr\Sigma_t - \tfrac{\gamma_t^2}{2}\,\tr\Sigma_t^2 + \tfrac{\gamma_t^3}{3}\,\tr\Sigma_t^3,
\]
where $\sum_j \lambda_j^k = \tr\Sigma_t^k$. Taking expectations and combining with \eqref{eq:Lt-closed-form},
\begin{equation}\label{eq:Lt-cubic-bound}
  \mathcal{L}_t(\theta^*)
  \;\le\;
  \frac{\gamma_t}{2}\,\E_{q(x_t)}\!\left[\tr\Sigma_t\right]
  - \frac{\gamma_t^2}{4}\,\E_{q(x_t)}\!\left[\tr\Sigma_t^2\right]
  + \frac{\gamma_t^3}{6}\,\E_{q(x_t)}\!\left[\tr\Sigma_t^3\right].
\end{equation}

Finally, the bounded-data assumption $\norm{x_0}\le D$ controls $\tr\Sigma_t^3$. By bias--variance decomposition,
\[
  \tr\Sigma_t \;=\; \E\!\left[\norm{x_0}^2 \,\big|\, x_t\right] - \norm{\E[x_0\mid x_t]}^2 \;\le\; D^2
\]
almost surely. Since $\Sigma_t\succeq 0$, $\sum_j \lambda_j^3 \le \big(\sum_j \lambda_j\big)^3$, so $\tr\Sigma_t^3 \le (\tr\Sigma_t)^3 \le D^6$ almost surely. Substituting into \eqref{eq:Lt-cubic-bound},
\[
  \mathcal{L}_t(\theta^*)
  \;\le\;
  \frac{\gamma_t}{2}\,\E_{q(x_t)}\!\left[\tr\Sigma_t\right]
  - \frac{\gamma_t^2}{4}\,\E_{q(x_t)}\!\left[\tr\Sigma_t^2\right]
  + \frac{\gamma_t^3 D^6}{6},
\]
which is the desired bound.
\end{proof}

\subsection{Proof of Lemma~\ref{le:exact_taylor}}

\begin{lemma}[Restatement of Lemma~\ref{le:exact_taylor}]
    Assuming that $\norm{x_0} \leq D$,
    \[
        \mathcal{L}^*_t
        \geq 
        \frac{\gamma_t}{2}\E_{q(x_t)}[\tr\Cov(x_0|x_t)] - \frac{\gamma_t^2}{4}\E_{q(x_t)}[\tr(\Cov(x_0|x_t)^2)] - \frac{\gamma_t^3D^6}{3} 
    \]
\end{lemma}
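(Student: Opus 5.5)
By Proposition~\ref{prop:gaussian_channel_representation}, it suffices to work at a fixed $x_t$ and lower bound $g(\gamma) := I(X;\sqrt{\gamma}X+Z)$ with $X := X_{0|t}\sim q(x_0\mid x_t)$. The target is
\[
g(\gamma)\ \ge\ \frac{\gamma}{2}\tr\Cov(X) - \frac{\gamma^2}{4}\tr(\Cov(X)^2) - \frac{\gamma^3 D^6}{3}.
\]
Taking $\E_{q(x_t)}$ of this inequality then yields the lemma, since $\Cov(X)=\Cov(x_0\mid x_t)$. The plan is a third-order Taylor expansion of $g$ at $\gamma=0$ with a Lagrange remainder.

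\textbf{Step 1: the derivatives at $0$.} The I-MMSE identity \citep{guo2005mutual} gives $g'(\gamma)=\frac12\,\mathrm{mmse}(\gamma) = \frac12\E\,\tr\Cov(X\mid Y_\gamma)$. Hence $g(0)=0$ and $g'(0)=\frac12\tr\Cov(X)$.

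For the second derivative we use the matrix form of the mmse derivative \citep{payaro2009hessian}:
\[
\frac{d}{d\gamma}\E[\Cov(X\mid Y_\gamma)] = -\E\!\left[\Cov(X\mid Y_\gamma)^2\right].
\]
This gives $g''(\gamma)=-\frac12\E\,\tr(\Cov(X\mid Y_\gamma)^2)$, and in particular $g''(0)=-\frac12\tr(\Cov(X)^2)$. These two values reproduce exactly the first- and second-order terms of the claim.

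\textbf{Step 2: bounding the third derivative (main obstacle).} We need $g'''(\gamma)\ge -2D^6$ uniformly for $\gamma\ge 0$. Then Taylor's theorem gives
\[
g(\gamma)\ \ge\ g'(0)\gamma + \tfrac12 g''(0)\gamma^2 - \tfrac{2D^6}{6}\gamma^3,
\]
which is exactly the stated remainder $-\gamma^3D^6/3$.

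To get this bound, I would invoke the higher-order derivative formulas of \citet{nguyen2024derivatives}. They express $\frac{d}{d\gamma}\E[\Cov(X\mid Y)^2]$ as a combination of conditional central moments of $X$ given $Y$. Concretely, one expects
\[
g'''(\gamma) = \E\,\tr(\Cov(X\mid Y)^3) - \tfrac12\E\big\|M_3(X\mid Y)\big\|_F^2,
\]
up to the exact constants, where $M_3$ is the conditional third central moment tensor. This uses $\partial_y\E[X\mid Y]=\sqrt\gamma\Cov(X\mid Y)$ and $\partial_y\Cov(X\mid Y)=\sqrt\gamma M_3$, applied after Gaussian integration by parts in $Y$.

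Every term can then be controlled with the boundedness $\|X\|\le D$, which persists conditionally on $Y$:
\begin{itemize}
\item $\tr(\Cov^3)\le(\tr\Cov)^3\le D^6$, as in the proof of Lemma~\ref{le:gaussian_taylor}.
\item $\|M_3\|_F^2\le \E\|X-\E X\|^6 \le (2D)^6$ crudely; this is too lossy for the constant $2$.
\end{itemize}
So at this step I would refine the third-moment bound. One option is to bound $\|M_3\|_F^2\le \tr(\Cov)\cdot\E\|X-\E X\|^4$ via Cauchy--Schwarz and then compare conditional fourth moments to $D^2\tr\Cov$, or to center more carefully, aiming for $|g'''|\le 2D^6$ or at least $g'''\ge -2D^6$.

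If the exact constant resists, the fallback is to track the derivative formula carefully and accept a constant multiple of $D^6$. This would only change the constant in Theorem~\ref{thm:mainTheorem}, not its $O(1/T^2)$ rate. Getting a clean explicit sign and constant for the third derivative is the delicate part of the whole argument.
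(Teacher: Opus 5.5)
Your route is the paper's route: the Gaussian-channel representation, $g'(0)$ from I-MMSE, $g''$ from \citet{payaro2009hessian}, the third-derivative formula $g'''(\gamma)=\E[\tr(\Cov(X\mid Y_\gamma)^3)-\tfrac12\|T_\gamma\|_F^2]$ from \citet{nguyen2024derivatives}, and a Lagrange remainder that needs $g'''\ge -2D^6$. The formula you wrote is exactly right. However, the coefficient $\tfrac12$ determines the stated constant, so it should not be left ``up to the exact constants''.

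\textbf{The gap.} You never establish $g'''\ge -2D^6$. The estimate you actually write down, $\|M_3\|_F^2\le(2D)^6$, only gives $g'''\ge -32D^6$. That yields a remainder of $-\tfrac{16}{3}\gamma_t^3D^6$, which does not prove the lemma as stated. Your fallback of accepting a constant multiple of $D^6$ changes the statement rather than proving it.

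\textbf{The missing step.} Peel off only one factor of the centered error. Set $\bar X=X-\E[X\mid Y_\gamma]$. Jensen gives $\|T_\gamma\|_F\le\E[\|\bar X\|^3\mid Y_\gamma]$. Using $\|\bar X\|\le 2D$ on a single factor,
\[
\|T_\gamma\|_F\le 2D\,\E[\|\bar X\|^2\mid Y_\gamma]=2D\,\tr\Cov(X\mid Y_\gamma)=2D\bigl(\E[\|X\|^2\mid Y_\gamma]-\|\E[X\mid Y_\gamma]\|^2\bigr)\le 2D^3,
\]
so $\tfrac12\|T_\gamma\|_F^2\le 2D^6$. Your Cauchy--Schwarz option also works if you carry it through: with $\E[\|\bar X\|^4\mid Y_\gamma]\le 4D^2\,\tr\Cov(X\mid Y_\gamma)$ it lands on the same bound $4D^6$.

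For the other term, a lower bound on $g'''$ needs $\tr(\Cov(X\mid Y_\gamma)^3)\ge 0$, which holds because the conditional covariance is PSD. The upper bound $\tr(\Cov^3)\le D^6$ in your first bullet points the wrong way and plays no role. Together these two facts give $g'''(\zeta)\ge -2D^6$ and hence the stated remainder $-\gamma_t^3D^6/3$.
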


\begin{proof}

According to Proposition~\ref{prop:gaussian_channel_representation}, $\mathcal{L}^*_t = \E_{q(x_t)}[I(X_{0|t};\sqrt{\gamma_t} X_{0|t} + Z)]$ where $X_{0|t}\sim q(x_0|x_t)$ and $Z\sim \N(0,I)$ independent of $X_{0|t}$. We fix an arbitrary $x_t$ and bound $I(X_{0|t};\sqrt{\gamma} X_{0|t} + Z)$ pointwise, and then take expectation over $x_t$ to obtain the desired bound on $\mathcal{L}^*_t$. 

We define 
\[
g(\gamma) := I(X_{0|t};\sqrt{\gamma} X_{0|t} + Z) 
\]
We proceed by computing the second-order Taylor expansion of $g(\gamma)$ around $\gamma=0$. We immediately observe the $g(0)=0$ since $X_{0|t}$ and $Z$ are independent. Next, by the I-MMSE identity (Theorem 2 of \citet{guo2005mutual}),
\[
g'(\gamma) = \frac{1}{2}\mathrm{mmse}(X_{0|t}|\sqrt{\gamma} X_{0|t} + Z) = \frac{1}{2}\E\!\left[\|X_{0|t} - \E[X_{0|t}|\sqrt{\gamma} X_{0|t} + Z]\|^2\right]
\]
where the expectation is taken over the joint distribution of $X_{0|t}$ and $Z$. Since $X_{0|t}$ and $Z$ are independent, $\E[X_{0|t}|Z] = \E[X_{0|t}]$, so
\[
g'(0) = \frac{1}{2}\E\!\left[\|X_{0|t} - \E[X_{0|t}|Z]\|^2\right] =  \frac{1}{2}\E\!\left[\|X_{0|t} - \E[X_{0|t}]\|^2\right] = \frac{1}{2}\tr\Cov(X_{0|t})
\]
Next, by the second-derivative formula for Gaussian channels (Corollary 1 of \citet{payaro2009hessian}),
\[
g''(\gamma) = -\frac{1}{2}\E\!\left[\tr(\Cov(X_{0|t}|\sqrt{\gamma} X_{0|t} + Z)^2)\right]
\]
where the expectation is taken over the random vector $\sqrt{\gamma} X_{0|t} + Z$. Again, since $X_{0|t}$ and $Z$ are independent, $\Cov(X_{0|t}|Z) = \Cov(X_{0|t})$. Hence
\[
g''(0) = -\frac{1}{2}\E\!\left[\tr(\Cov(X_{0|t}|Z)^2)\right] = -\frac{1}{2}\tr(\Cov(X_{0|t})^2).
\]
As for the third derivative, we proceed by defining 
\[Y_\gamma := \sqrt{\gamma} X_{0|t} + Z, \qquad
\bar X_\gamma := X_{0|t} - \E[X_{0|t}|Y_\gamma], \qquad
T_\gamma := \E[\bar X_\gamma^{\otimes 3}|Y_\gamma].
\]
By the third-derivative formula for Gaussian channels (Theorem 1, in particular Example 3, of \citet{nguyen2024derivatives}),
\[
g'''(\gamma) = \E\!\left[\tr(\Cov(X_{0|t}|Y_\gamma)^3) - \frac{1}{2}\|T_\gamma\|_F^2\right]
\]
where the expectation is taken over the random vector $Y_\gamma$. Applying Taylor's theorem with remainder at $\gamma=0$, there exists $\zeta\in(0,\gamma_t]$ such that
\[
g(\gamma_t) = g(0) + g'(0)\gamma_t + \frac{g''(0)}{2}\gamma_t^2 + \frac{g'''(\zeta)}{6}\gamma_t^3.
\] Substituting the expressions above yields
\[
g(\gamma_t) = \frac{\gamma_t}{2}\tr\Cov(X_{0|t}) - \frac{\gamma_t^2}{4}\tr(\Cov(X_{0|t})^2)
+ \frac{\gamma_t^3}{6}\E\!\left[\tr(\Cov(X_{0|t}|Y_\zeta)^3) - \frac{1}{2}\|T_\zeta\|_F^2\right].
\]
Finally, for the desired lower bound we observe by Jensen's inequality that 
\[
\|T_\zeta\|_F = \norm{\E[\bar X_\zeta^{\otimes 3}|Y_\zeta]}_F \le \E\!\left[\norm{\bar X_\zeta^{\otimes 3}}_F \,\big|\, Y_\zeta\right] = \E\!\left[\norm{\bar X_\zeta}^3 \,\big|\, Y_\zeta\right]
\]
By $\norm{x_0}\le D$ almost surely,
\[
\norm{\bar X_\zeta} = \norm{X_{0|t} - \E[X_{0|t}|Y_\zeta]} \le \norm{X_{0|t}} + \norm{\E[X_{0|t}|Y_\zeta]} \le 2D
\]  
Therefore,
\begin{align*}
  \|T_\zeta\|_F 
  & \le 2D \E\!\left[\norm{\bar X_\zeta}^2 \,\big|\, Y_\zeta\right] \\
  & = 2D (\E\!\left[\norm{X_{0|t}}^2 \,\big|\, Y_\zeta\right] - \norm{\E[X_{0|t}|Y_\zeta]}^2) \\
  & \leq 2D^3
\end{align*}
Therefore, $\|T_\zeta\|_F^2 \le 4D^6$, and
\[
\E\!\left[\tr(\Cov(X_{0|t}|Y_\zeta)^3) - \frac{1}{2}\|T_\zeta\|_F^2\right] \ge -2D^6.
\]
Writing $\Cov(X_{0|t})$ as $\Cov(x_0|x_t)$ then substituting the above inequality into the expression for $I(X_{0|t};\sqrt{\gamma_t} X_{0|t} + Z)$ and taking expectation over $x_t$ gives the desired lower bound on $\mathcal{L}^*_t$.

\[
\mathcal{L}^*_t
\geq 
\frac{\gamma_t}{2}\E_{q(x_t)}[\tr\Cov(x_0|x_t)] - \frac{\gamma_t^2}{4}\E_{q(x_t)}[\tr(\Cov(x_0|x_t)^2)] - \frac{\gamma_t^3D^6}{3} 
\]
\end{proof}

%% file: sections/appendix/appendix_experiments.tex
\section{Experimental Details}
\label{app:experimental-details}

\subsection{Details of the Lanczos Gaussian sampler}
\label{app:experimental-details:lanczos-gaussian-sampler}

\textbf{Pretrained-networks for Hessian-vector products.} LGS only requires a pretrained score network, $s_\theta(x_t,t)$, which we obtain from the noise-prediction networks $\epsilon_\theta(x_t,t)$ released by prior work via
$s_\theta(x_t,t) = -\nabla_{x}\log p_\theta(x_t,t) = -\tfrac{1}{\sqrt{1-\bar{\alpha}_t}}\,\epsilon_\theta(x_t,t)$.
We use the checkpoints provided by \citet{bao2022analytic} for CIFAR-10, \citet{song2020denoising} for CelebA $64\times64$, and \citet{nichol2021improved} for ImageNet $64\times64$; these are the same checkpoints redistributed by \citet{ou2025ocm}, which makes our results directly comparable.

A Hessian--vector product (HVP) of $\log p_\theta(x_t,t)$ against a vector $v$ is computed by a single forward pass through $\epsilon_\theta$ followed by a single backward pass via PyTorch's \texttt{torch.autograd.grad} with \texttt{create\_graph=False} on the inner product $\langle \epsilon_\theta(x_t,t), v\rangle$.

\textbf{Ritz Value Clamping.} As noted in \citep{bao2022analytic} and \citep{ou2025ocm}, covariance clipping is essential to performance of diffusion models with unfixed variance in the backward kernel. An analogous clipping of the Ritz values (eigenvalues of $T_m$) is employed in the Lanczos Gaussian sampler: we clip below by $\tilde{\beta}_{t \to s}$ and above by $\tilde{\beta}_{t \to s} + \frac{\bar\alpha_s \beta_{t \to s}^2}{(1-\bar\alpha_t)^2}$. 
These bounds are motivated by the analytic form of the optimal covariance, under the standard assumption $0 \preceq \mathrm{Cov}[x_0\mid x_t] \preceq I$
(the same \texttt{clip\_cov\_x0} condition used in \citet{bao2022analytic}).

\textbf{Hutchinson diagonal guard.}
At the final stochastic reverse step, after drawing Lanczos noise $n\approx \Sigma_t^{1/2}z$, we stabilize
it by estimating $\mathrm{diag}(\Sigma_t)$ with $M$ Rademacher Hutchinson
probes, $\widehat{d}\approx M^{-1}\sum_j r_j\odot(\Sigma_t r_j)$, reusing the
same cached score graph so each probe costs one backward pass.
We clamp $\widehat{d}$ to the Ritz spectral interval, then rescale each
coordinate of $n$ by $\sqrt{\min(\widehat{d}_i,\sigma_{\mathrm{pix}}^2)/
\widehat{d}_i}$ with $\sigma_{\mathrm{pix}}$ fixed from \texttt{clip\_pixel}
\citep{bao2022analytic,ou2025ocm}.
This caps rare per-coordinate blow-ups while leaving typical coordinates
unchanged. We take the same hyperparameter as in \citep{ou2025ocm} for which denoising step index to apply this guard, and we use $M=5$ Hutchinson probes.

\textbf{LGS with batching and LGSb.}
We implement two variants of the Lanczos Gaussian sampler for speedup of wall-clock time for generating individual samples: (1) LGSb and (2) Window-LGSb. 

\textbf{LGS with batching.} Every $l$ denoising steps, we compute $l$ correlated noise vectors $\Sigma^{1/2} Z_1, \Sigma^{1/2} Z_2, \dots, \Sigma^{1/2} Z_l$ for independent Gaussian draws $Z_1, Z_2, \dots, Z_l$ and then utilize each of these at time steps $t, t-1, t-2, \dots, t-l+1$ respectively. 
The cost of doing this is that the network Jacobian used in the covariances used at time steps $t-1,t-2,\dots,t-l+1$ are effectively slightly 'stale', but this is not a concern if step sizes are sufficient small. 
The benefit of doing this is that these $l$ noise vectors are generated with a single tiled forward pass of
effective batch size $l \times B$, followed by $k$ shared backward passes
of the same width---one Lanczos iteration per backward. Thus, instead of accruing $k \times l$ backward passes through the score network if each noise vector was computed at its own denoising step, we only need to compute $k$ backward passes through the score network (assuming no overhead for parallelization). For our experiments, we use $l=2,3$.

\textbf{LGSb.} LGSb is the LGS method applied only to the last $w$ fraction of the denoising process, while the remaining denoising steps use the $\tilde{\beta}_t$ covariance. For our experiments, we use $w=0.25$.

For the impact of the above two methods on wall-clock time for generating individual samples, see Appendix~\ref{app:experimental-details:timing-analysis}.

\subsection{Image Experiments Setup}
\label{app:experimental-details:evaluation-details}
Our experimental methodology for evaluating the FID score closely follows that of \citep{ou2025ocm}.
The FID score is generated via 50K samples from the model. Following \citep{nichol2021improved, bao2022estimating, peebles2023scalable, ou2025ocm}, the reference distribution statistics are computed using the full training set for CIFAR-10 and ImageNet, and 50K training samples for CelebA 64x64.

For compute, we used single A100 80GB GPUs per 50K sample FID evaluations. We used batchsize of 200 for all the FID computations for all datasets and all methods. 

For Table~\ref{tab:fid_all_datasets}, we recompute the FID score also for DDPM $\beta$, DDPM $\tilde{\beta}$ and OCM-DDPM methods, while using the same sampling hyperparameters as prescribed by \citep{ou2025ocm}. For the LGS family of methods, we use the sampling hyperaparameters $k=3,5$ for the number of Lanczos iterations, and optionally $l=2,3$ for the number of denoising steps to batch. Note that $l=1$ is equivalent to the standard LGS method.

\subsection{Timing Analysis}
\label{app:experimental-details:timing-analysis}
To obtain a heuristic estimate of the wall-clock time for generating individual samples, we assume that the time for a single forward pass through the score network is the same as the time for a single backward pass. We also assume that the bottleneck for the sample generation time is the score network evaluations, and disregard other factors for this heuristic estimate.
Further, as claimed by \citet{ou2025ocm} and empirically validated by the authors and ourselves, the wall-clock time for single sample generation of OCM-DDPM is almost the same as that of DDPM $\beta$ and DDPM $\tilde{\beta}$, i.e. one forward pass through the score network.

For our LGS method, we require $1$ forward pass through the score network for each denoising step to predict the mean, and $k$ backward passes through the score network for each Lanczos iteration. Thus, we expect LGS to require $(k+1)$ times as much time as OCM-DDPM to generate a single sample.

For our LGS with batching method, every $l$ denoising steps, we compute $l$ noise vectors in parallel using the current timestep's score network JVPs. Assuming small enough $l$ so that the Lanczos algorithm for $l$ of these vectors may be run in parallel on the same GPU with no additional overhead, we expect LGS with batching to require $(k+1)$ network evaluations every $l$ denoising steps, and $1$ network evaluations for the next $l-1$ denoising steps (as we only need to compute the mean, and use the noise vector computed previously).
Thus, we expect LGS with batching to require $$(k+1) \times \frac{1}{l} + 1 \times \frac{l-1}{l} = \left(  1 + \frac{k}{l} \right)$$ times as much time as OCM-DDPM to generate a single sample.

For our LGSb method, we expect the LGS with batching runtime for only $w$ fraction of the denoising process, and the same runtime as OCM-DDPM for the remaining $1-w$ fraction of the denoising process. Thus, we expect LGSb to require $$\left(1 + \frac{k}{l}\right) \times w + (1) \times (1-w) = \left(1 + \frac{kw}{l}\right)$$ times as much time as OCM-DDPM to generate a single sample.

We empirically evaluate the wall-clock time for generating $n=99$ samples (post-warmup) for various methods and report the results in Table~\ref{tab:timing-analysis}. We report ratios of the median wall-clock time of each method to the median wall-clock time of OCM-DDPM, alongside the above heuristic estimate for each method.

\begin{table}[h]
\centering
\caption{%
  Wall-clock time per sample (seconds; $n=99$ post-warmup; $T=50$ steps).
  \emph{ratio} $=$ median $\div$ OCM-DDPM median for the same dataset.
  \emph{heuristic} is the predicted ratio derived in the text above (same for all datasets).%
}
\label{tab:timing-analysis}
\resizebox{\linewidth}{!}{%
\setlength{\tabcolsep}{4.5pt}%
\renewcommand{\arraystretch}{0.95}%
\begin{tabular}{@{}l rrrr rrrr rrrr r@{}}
\toprule
& \multicolumn{4}{c}{CIFAR-10 (LS)}
& \multicolumn{4}{c}{\textsc{CelebA}~$64\times64$}
& \multicolumn{4}{c}{\textsc{ImageNet}~$64\times64$}
& \\
\cmidrule(lr){2-5}\cmidrule(lr){6-9}\cmidrule(lr){10-13}
Model
& mean\,(s) & med.\,(s) & std\,(s) & ratio
& mean\,(s) & med.\,(s) & std\,(s) & ratio
& mean\,(s) & med.\,(s) & std\,(s) & ratio
& heuristic \\
\midrule
DDPM, $\beta$
  & 1.183 & 1.184 & 0.003 & \textbf{0.989}
  & 0.805 & 0.805 & 0.002 & \textbf{0.987}
  & 1.221 & 1.221 & 0.004 & \textbf{0.985}
  & \textbf{1.00} \\
DDPM, $\tilde{\beta}$
  & 1.223 & 1.209 & 0.050 & \textbf{1.010}
  & 0.811 & 0.811 & 0.003 & \textbf{0.994}
  & 1.232 & 1.232 & 0.003 & \textbf{0.994}
  & \textbf{1.00} \\
OCM-DDPM
  & 1.197 & 1.197 & 0.003 & \textbf{1.000}
  & 0.816 & 0.815 & 0.003 & \textbf{1.000}
  & 1.239 & 1.239 & 0.004 & \textbf{1.000}
  & \textbf{1.00} \\
\midrule
LGS ($k=3$)
  & 4.093 & 4.052 & 0.076 & \textbf{3.385}
  & 2.850 & 2.850 & 0.008 & \textbf{3.496}
  & 4.282 & 4.282 & 0.011 & \textbf{3.455}
  & \textbf{4.00} \\
LGS ($k=5$)
  & 5.836 & 5.768 & 0.097 & \textbf{4.819}
  & 4.111 & 4.103 & 0.027 & \textbf{5.032}
  & 6.181 & 6.180 & 0.018 & \textbf{4.987}
  & \textbf{6.00} \\
LGS ($k=3,\,l=2$)
  & 2.779 & 2.759 & 0.038 & \textbf{2.305}
  & 1.927 & 1.928 & 0.005 & \textbf{2.365}
  & 2.893 & 2.894 & 0.009 & \textbf{2.335}
  & \textbf{2.50} \\
LGS ($k=3,\,l=3$)
  & 2.281 & 2.281 & 0.010 & \textbf{1.906}
  & 1.585 & 1.584 & 0.006 & \textbf{1.942}
  & 2.415 & 2.416 & 0.011 & \textbf{1.950}
  & \textbf{2.00} \\
LGS ($k=5,\,l=2$)
  & 3.635 & 3.633 & 0.013 & \textbf{3.036}
  & 2.579 & 2.580 & 0.007 & \textbf{3.165}
  & 3.925 & 3.923 & 0.026 & \textbf{3.166}
  & \textbf{3.50} \\
LGS ($k=5,\,l=3$)
  & 2.931 & 2.906 & 0.089 & \textbf{2.428}
  & 2.037 & 2.036 & 0.007 & \textbf{2.497}
  & 3.039 & 3.037 & 0.010 & \textbf{2.451}
  & \textbf{2.67} \\
LGSb ($k=3,\,w=0.25$)
  & 1.994 & 1.967 & 0.074 & \textbf{1.644}
  & 1.365 & 1.365 & 0.004 & \textbf{1.674}
  & 2.043 & 2.044 & 0.005 & \textbf{1.649}
  & \textbf{1.75} \\
LGSb ($k=3,\,l=2,\,w=0.25$)
  & 1.639 & 1.639 & 0.019 & \textbf{1.369}
  & 1.119 & 1.117 & 0.007 & \textbf{1.370}
  & 1.695 & 1.696 & 0.005 & \textbf{1.368}
  & \textbf{1.38} \\
\bottomrule
\end{tabular}%
}
\end{table}